\documentclass[10pt,twocolumn,letterpaper]{article}

\usepackage[pagenumbers]{cvpr}

\usepackage[utf8]{inputenc}
\usepackage[T1]{fontenc}

\definecolor{cvprblue}{rgb}{0.21,0.49,0.74}
\usepackage[pagebackref,breaklinks,colorlinks,allcolors=cvprblue]{hyperref}

\usepackage{amsthm}
\usepackage{mdframed}

\mdfdefinestyle{theoremstyle}{
  backgroundcolor=gray!5,
  linecolor=gray!30,
  linewidth=0.5pt,
  topline=true,
  bottomline=true,
  leftline=true,
  rightline=true,
  innertopmargin=8pt,
  innerbottommargin=8pt,
  innerleftmargin=8pt,
  innerrightmargin=8pt,
  skipabove=10pt,
  skipbelow=10pt,
  roundcorner=2pt
}

\theoremstyle{plain}
\newmdtheoremenv[style=theoremstyle]{proposition}{Proposition}[section]
\newmdtheoremenv[style=theoremstyle]{lemma}[proposition]{Lemma}

\makeatletter
\renewenvironment{proof}[1][\proofname]{%
  \par
  \pushQED{\qed}%
  \normalfont \topsep6\p@\@plus6\p@\relax
  \trivlist
  \item[\hskip\labelsep
        \itshape\bfseries
    #1\@addpunct{.}]\ignorespaces
}{%
  \popQED\endtrivlist\@endpefalse
}
\makeatother

\usepackage{algorithm}
\usepackage{algpseudocode}
\usepackage{multirow}
\usepackage{makecell}
\usepackage{colortbl}
\usepackage{array}
\usepackage{float}

\newcolumntype{N}{>{\centering\arraybackslash}m{1.00cm}}

\newcommand{\dtGain}[2]{%
  \makebox[4.00em][c]{\makebox[2.20em][c]{#1\rlap{\kern0.16em\smash{\raisebox{-0.55ex}{\textcolor{green!60!black}{\scriptsize +#2}}}}}}%
}
\newcommand{\dtSignedGain}[2]{%
  \makebox[4.00em][c]{\makebox[2.20em][c]{#1\rlap{\kern0.16em\smash{\raisebox{-0.55ex}{\textcolor{green!60!black}{\scriptsize #2}}}}}}%
}
\newcommand{\dtLoss}[2]{%
  \makebox[4.00em][c]{\makebox[2.20em][c]{#1\rlap{\kern0.16em\smash{\raisebox{-0.55ex}{\textcolor{red!60}{\scriptsize #2}}}}}}%
}
\AtBeginEnvironment{algorithm}{\ifdefined\nolinenumbers\nolinenumbers\fi}
\AtEndEnvironment{algorithm}{\ifdefined\linenumbers\linenumbers\fi}
\algrenewcommand\algorithmicrequire{\textbf{Input:}}
\algrenewcommand\algorithmicensure{\textbf{Output:}}
\algnewcommand\algorithmicbreak{\textbf{break}}
\algnewcommand\Break{\algorithmicbreak}

\def\paperID{*****}
\def\confName{CVPR}
\def\confYear{2026}
\title{DT-LoRA: Reaching Beyond the Tangent Space of Low-Rank Adaptation}
\author{
  Yangming Zhang$^{1}$, Ling Yi$^{1}$\thanks{Corresponding authors.}, Datong Li$^{2}$,
  Jinliang Ding$^{3}$,\\
  Xiaojie Wang$^{1}$, Tiecheng Song$^{1}$, Zhaolong Ning$^{1}$\\[0.5em]
  $^{1}$Chongqing University of Posts and Telecommunications, China\\
  $^{2}$Anhui University of Science and Technology\\
  $^{3}$Northeastern University, China\\
}

\begin{document}
\maketitle

\begin{abstract}
Low-rank adaptation (LoRA) often underperforms full fine-tuning. We investigate this performance gap through a tangent-space lens: factorized updates are confined to the tangent space of the current low-rank factors, leaving gradient components outside this space unexploited. An analysis of ten representative LoRA variants reveals that these residual components account for most of the gradient energy and exhibit strong spectral concentration. Motivated by this finding, we propose Dynamic Tangent LoRA (DT-LoRA), which partitions a fixed rank budget into a stable branch and a dynamic branch to capture the principal residual gradient directions. 
We show that these directions define the optimal update of a local quadratic model under orthogonal-complement and rank constraints. Matrix-free randomized SVD recovers them without materializing the residual gradient, while controlled write-back preserves accumulated updates and stabilizes training within the allocated rank. 
DT-LoRA adds no trainable parameters or inference cost over LoRA. Experiments across computer vision and natural language processing tasks demonstrate consistent improvements over LoRA and its variants. Moreover, despite being a general-purpose adaptation method, DT-LoRA outperforms task-specific approaches to open-vocabulary domain generalization for semantic segmentation, achieving state-of-the-art results. Code is available at \url{https://github.com/Snipine/DT-LoRA}.
\end{abstract}

\section{Introduction}
\label{sec:intro}

Large-scale pre-trained models serve as general-purpose foundations for a wide range of vision and language tasks. However, as model sizes increase, full fine-tuning becomes prohibitively expensive, requiring all model parameters to be updated and a separate set of weights to be stored for each downstream task. Parameter-efficient fine-tuning (PEFT) reduces this cost by freezing the pre-trained weights and optimizing only a small set of parameters~\cite{houlsby2019parameter,lester2021power,jia2022vpt}. Among PEFT methods, low-rank adaptation (LoRA)~\cite{hu2022lora} has been widely adopted for its low-rank parameterization of weight updates, which can be merged into the pre-trained weights without additional inference overhead.

Nevertheless, LoRA often falls short of full fine-tuning. Prior work narrows this gap by modifying the update structure (DoRA~\cite{liu2024dora}), improving subspace initialization (PiSSA~\cite{meng2024pissa}, LoRA-GA~\cite{wang2024loraga}), or correcting the factor gradients and optimizer states to better approximate full fine-tuning updates (LoRA-Pro~\cite{wang2025lorapro}, AltLoRA~\cite{yu2025altlora}, LoFT~\cite{tastan2026loft}). However, a fundamental local constraint remains: under factor perturbations $\delta A$ and $\delta B$, the first-order weight update lies entirely in the tangent space defined by the current factors. The dense gradient component orthogonal to this space therefore cannot be captured by any first-order update, regardless of how the factor gradients are corrected at the current step.

This raises a question: how much gradient information lies outside the current tangent space? We analyze the tangent-space residual gradient across ten representative LoRA variants (\cref{fig:residual-energy-analysis}) and identify two consistent properties. First, the residual accounts for more than 85\% of the total gradient energy across all methods. Second, it exhibits strong spectral concentration: the leading singular component captures more than 87\% of the residual energy, while the top four capture more than 94\%. These findings reveal that the current tangent space leaves substantial gradient information unexploited, yet most of this information can be recovered through a very low-rank approximation.

A natural idea to exploit the residual gradient is to continually incorporate its principal directions into the trainable subspace. However, increasing the rank incurs a proportional parameter cost, whereas replacing existing factors at a fixed rank can discard accumulated adaptation information and disrupt training continuity. ReLoRA~\cite{lialin2023relora}, Flora~\cite{hao2024flora}, and GaLore~\cite{zhao2024galore} also switch low-rank subspaces during training, primarily to overcome rank limitations or reduce training memory, but do not explicitly target gradient components outside the current tangent space. The key challenge is therefore to incorporate these principal residual directions within a fixed rank budget while retaining learned information, maintaining training stability, and preserving mergeability into the pre-trained weights.

To address this challenge, we propose Dynamic Tangent LoRA (DT-LoRA), which partitions a fixed rank budget into a stable branch that accumulates task-specific directions as in standard LoRA and a dynamic branch that periodically absorbs principal residual directions outside the current tangent space. We prove that, under orthogonal-complement and rank constraints, the optimal update for a local quadratic model is given by the truncated SVD of the tangent-space residual gradient. Crucially, this residual inherits a factored structure from backpropagation, enabling its products with narrow matrices to be computed directly from cached input activations and output gradients. This structure allows us to construct matrix-free operators for Matrix-Free Randomized SVD, which extracts the principal residual directions via randomized SVD~\cite{halko2011finding} without explicitly forming the residual gradient. To incorporate these directions while maintaining training stability, we introduce Controlled Write-back mechanism, which merges them with the current dynamic branch update, compresses the result back to the original rank, and bounds the relative change at each refresh. After training, both branches are merged into the pre-trained weights, yielding the same inference cost as standard LoRA.

We evaluate DT-LoRA across diverse vision and language tasks: the 19 image classification tasks in VTAB-1K; object detection and instance segmentation on MS COCO; open-vocabulary domain generalization for semantic segmentation (OVDG-SS) in both real-to-real and synthetic-to-real settings; eight commonsense reasoning tasks using LLaMA2-7B and LLaMA3-8B; and GLUE using DeBERTaV3-base. Under matched trainable parameter budgets, DT-LoRA consistently outperforms LoRA and its variants. On MS COCO, DT-LoRA at $r=16$ achieves an average score of 46.6, exceeding LoRA at the same rank by 8.2 points and outperforming LieRA~\cite{si2025liera}, which uses twice as many trainable parameters. On OVDG-SS, DT-LoRA establishes new state-of-the-art results, surpassing the task-specific S$^{2}$-Corr~\cite{zhao2026s2corr} despite being a general-purpose fine-tuning method.

Our main contributions are summarized as follows:
\begin{itemize}
\item We identify a geometric limitation of factorized LoRA: substantial gradient energy lies outside the current tangent space. Across ten representative LoRA variants, we show that this tangent-space residual exhibits strong spectral concentration, and establish that its truncated SVD yields the optimal update under orthogonal-complement and rank constraints for a local quadratic model.
\item We propose DT-LoRA, which partitions a fixed rank budget into Stable and Dynamic branches to incorporate principal residual directions. Matrix-Free Randomized SVD extracts these directions without materializing the residual gradient, while Controlled Write-back integrates them with accumulated updates and bounds the change at each refresh, enabling dynamic tangent-space adaptation without additional trainable parameters or inference overhead.
\item We demonstrate the effectiveness of DT-LoRA across diverse vision and language tasks, with consistent improvements over LoRA and its variants under matched trainable parameter budgets. On OVDG-SS, DT-LoRA outperforms task-specific methods and establishes new state-of-the-art results.
\end{itemize}


\section{Related Work}
\label{sec:related-work}

\subsection{Parameter-Efficient Fine-Tuning}
To avoid the memory and storage costs of full fine-tuning, PEFT freezes the pre-trained backbone and trains only a small number of parameters. Early methods can be roughly divided into three categories. The first category inserts additional trainable modules into the network, such as Adapter~\cite{houlsby2019parameter}, which adds bottleneck modules sequentially within Transformer layers, and AdaptFormer~\cite{chen2022adaptformer}, which places them in parallel with the MLP. The second category introduces learnable continuous prompts into the input or intermediate representations, such as Prompt Tuning~\cite{lester2021power}, Prefix-Tuning~\cite{li2021prefix}, and VPT~\cite{jia2022vpt} for vision Transformers. The third category tunes only a subset of the model parameters or features, such as BitFit~\cite{benzaken2022bitfit}, which fine-tunes only the bias terms, and SSF~\cite{lian2022ssf}, which scales and shifts the features. However, the first two categories retain extra modules or prompt tokens at inference, adding inference overhead, while the third offers limited flexibility for adaptation. In contrast, reparameterization methods such as LoRA~\cite{hu2022lora} represent the weight update with low-rank matrices that can be merged into the pre-trained weights after training, balancing expressiveness and deployment efficiency.
\begin{figure*}[t]
	\centering
	\begin{subfigure}[t]{0.30\textwidth}
		\centering
		\includegraphics[width=\linewidth]{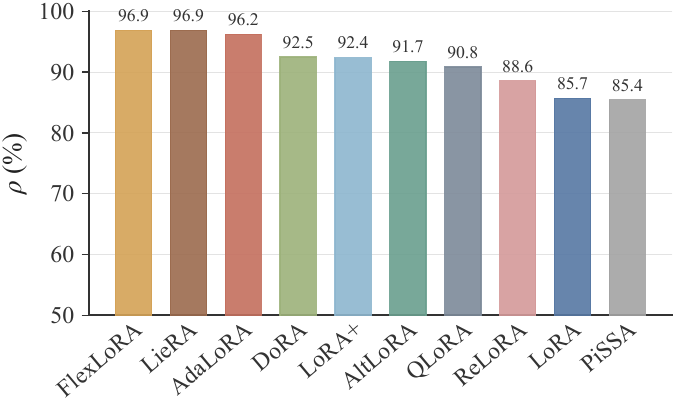}
		\caption{Residual-gradient energy ratio $\rho$.}
		\label{fig:residual-rho-10methods}
	\end{subfigure}
	\hfill
	\begin{subfigure}[t]{0.30\textwidth}
		\centering
		\includegraphics[width=\linewidth]{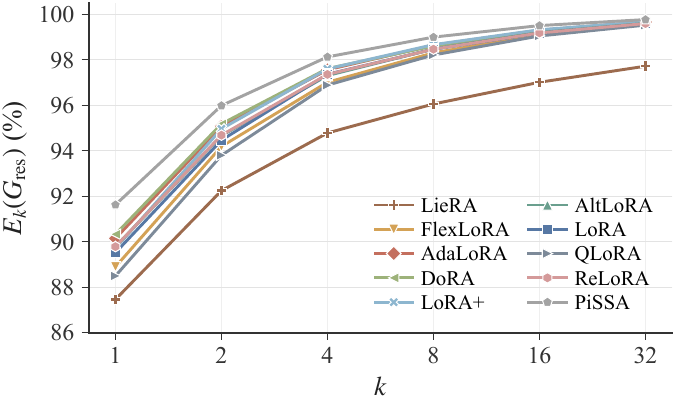}
		\caption{Top-$k$ residual spectral energy $E_k$.}
		\label{fig:residual-energy-curves-10methods}
	\end{subfigure}
	\hfill
	\begin{subfigure}[t]{0.30\textwidth}
		\centering
		\includegraphics[width=\linewidth]{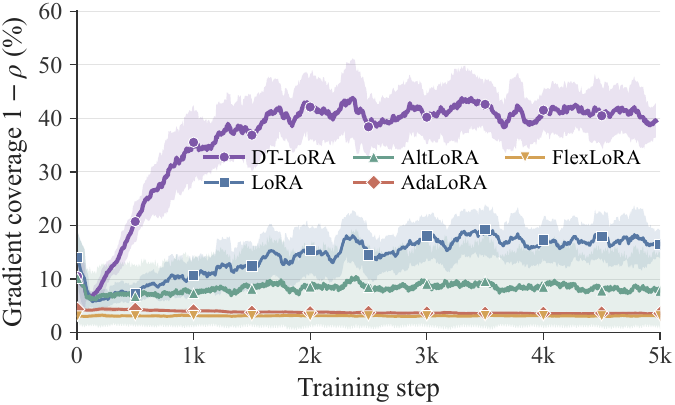}
		\caption{Gradient coverage during training.}
		\label{fig:gradient-coverage}
	\end{subfigure}
	\caption{Analysis of the tangent-space residual gradient. (a)~Tangent-space residual gradient energy ratio $\rho$ of ten LoRA variants. (b)~Cumulative spectral energy $E_k$ of the tangent-space residual gradient. (c)~Gradient coverage $1-\rho$ of four LoRA variants and DT-LoRA during training. For various LoRA variants, the tangent-space residual gradient accounts for a large proportion of the gradient energy and exhibits spectral concentration. Our DT-LoRA explicitly accounts for this property, and its gradient coverage keeps increasing during training, indicating that the principal directions of the tangent-space residual gradient are effectively exploited.}
	\label{fig:residual-energy-analysis}
\end{figure*}
\subsection{Low-Rank Adaptation and Its Variants}
Subsequent work improves LoRA mainly in three aspects. The first concerns the structure of the update and rank allocation: DoRA~\cite{liu2024dora} decomposes the weight into magnitude and direction and updates the direction in low-rank form; LieRA~\cite{si2025liera} extends low-rank updates to high-dimensional parameters such as convolution kernels via the exponential map of Lie groups; AdaLoRA~\cite{zhang2023adalora} and FlexLoRA~\cite{liu2026flexlora} dynamically allocate the rank budget across layers according to matrix importance. The second concerns the choice of the low-rank subspace: PiSSA~\cite{meng2024pissa} initializes the adapter with the principal singular components of the pre-trained weights, MiLoRA~\cite{wang2025milora} and SoMA~\cite{yun2025soma} instead select the minor singular components to reduce interference with pre-trained knowledge, and LoRA-GA~\cite{wang2024loraga} aligns the initial update direction with the gradient of full fine-tuning. However, these methods determine the subspace only at the beginning of training. The third concerns the optimization process: LoRA-Pro~\cite{wang2025lorapro} and AltLoRA~\cite{yu2025altlora} correct the factor gradients so that the induced equivalent gradient better approximates the full fine-tuning gradient, and LoFT~\cite{tastan2026loft} further aligns the moment estimates of the optimizer. Nevertheless, the first-order updates of these methods remain confined to the tangent space determined by the current factors, leaving the gradient components outside it unexploited. ReLoRA~\cite{lialin2023relora}, Flora~\cite{hao2024flora}, and GaLore~\cite{zhao2024galore} switch subspaces by periodically restarting the low-rank update, resampling projections, or updating the projection subspace via the SVD of the gradient, respectively. However, they either discard the learned low-rank factors, rely on random sampling, or target full-parameter training rather than mergeable adapters, and none of them explicitly consider the residual gradients outside the current tangent space.

\section{Motivation}

\subsection{Preliminaries}
\label{sec:preliminaries}

We first review LoRA. Consider a linear projection layer in a pre-trained model, and denote its frozen pre-trained weight by $W_0\in\mathbb{R}^{d_{\mathrm{out}}\times d_{\mathrm{in}}}$. Instead of updating $W_0$ directly, standard LoRA adds a low-rank update to it:
\begin{equation}
	W=W_0+D,\qquad D=sBA,
	\label{eq:vanilla-lora}
\end{equation}
where $A\in\mathbb{R}^{r\times d_{\mathrm{in}}}$ and $B\in\mathbb{R}^{d_{\mathrm{out}}\times r}$ are trainable factors, $r$ is the factor width, and $s=\alpha/r$ is a scaling factor. Since $\operatorname{rank}(D)\leq r$, this parameterization represents a low-rank weight update with $r(d_{\mathrm{in}}+d_{\mathrm{out}})$ parameters.

\subsection{Tangent-Space Residual of Factorized LoRA}
\label{sec:tangent-residual}

LoRA's low-rank parameterization constrains both the rank of the weight update and the gradient directions accessible at each optimization step. Under infinitesimal perturbations $\delta A$ and $\delta B$ of the factors in \cref{eq:vanilla-lora}, the first-order change in the weight update is
\begin{equation}
	\delta D=s\bigl(B\,\delta A+\delta B\,A\bigr).
	\label{eq:lora-tangent}
\end{equation}
The set of all such changes defines the tangent space $\mathcal{T}$ induced by the current LoRA factors, \ie, the weight directions accessible to first order through factor updates. Thus, under the Frobenius inner product, a direction $\Delta$ is orthogonal to $\mathcal{T}$ if and only if $B^{\top}\Delta=0$ and $\Delta A^{\top}=0$.

At a probe step, let $X\in\mathbb{R}^{N\times d_{\mathrm{in}}}$ denote the layer's input activations and $R=\partial\mathcal{L}/\partial Y\in\mathbb{R}^{N\times d_{\mathrm{out}}}$ the corresponding output gradients from the same backward pass, where $Y=XW^{\top}$ and $\mathcal{L}$ is the training loss. We define the normalized local weight gradient estimate as the token-averaged outer product:
\begin{equation}
	G=\frac{1}{N}R^{\top}X.
	\label{eq:probe-gradient}
\end{equation}

Let $Q_A=\operatorname{orth}(A^{\top})$ and $Q_B=\operatorname{orth}(B)$ be orthonormal bases for the row space of $A$ and the column space of $B$, respectively. The orthogonal projectors onto their complementary subspaces are
\begin{equation}
	P_A^{\perp}=I-Q_AQ_A^{\top},\qquad
	P_B^{\perp}=I-Q_BQ_B^{\top}.
	\label{eq:orth-complements}
\end{equation}
We define the \emph{tangent-space residual gradient} as the component of the local gradient orthogonal to the current tangent space:
\begin{equation}
	G_{\mathrm{res}}
	=P_B^{\perp}GP_A^{\perp}
	=\frac{1}{N}(RP_B^{\perp})^{\top}(XP_A^{\perp}).
	\label{eq:tangent-residual}
\end{equation}
By construction, $Q_B^{\top}G_{\mathrm{res}}=0$ and $G_{\mathrm{res}}Q_A=0$. Thus, for every first-order change $\delta D\in\mathcal{T}$, we have
\begin{equation}
	\langle G_{\mathrm{res}},\delta D\rangle_F=0,
	\label{eq:residual-orthogonality}
\end{equation}
where $\langle\cdot,\cdot\rangle_F$ denotes the Frobenius inner product. Since $G-G_{\mathrm{res}}\in\mathcal{T}$, $G_{\mathrm{res}}$ is precisely the orthogonal projection of $G$ onto $\mathcal{T}^{\perp}$, \ie, the gradient component inaccessible to first-order updates of the current factors.

\subsection{Empirical Analysis of Gradient Coverage}
\label{sec:empirical-gradient-analysis}

We quantify gradient coverage by the current tangent space during training. Taking CAT-Seg fine-tuning as an example, we analyze the query, key, value, and output projections across all layers of the vision and text encoders, 144 matrices in total. At each probe step, we compute the local gradient $G_i$ of the $i$-th matrix by \cref{eq:probe-gradient} and its residual $G_{\mathrm{res},i}$ by \cref{eq:tangent-residual}. For each LoRA variant, we define the residual with respect to the tangent space induced by its own parameterization. We measure the residual energy ratio and cumulative residual spectral energy as
\begin{equation}
	\rho = \frac{\sum_i\lVert G_{\mathrm{res},i}\rVert_F^2}{\sum_i\lVert G_i\rVert_F^2},
	\qquad
	E_k(G_{\mathrm{res}}) = \frac{\sum_i\sum_{j=1}^{k}\sigma_j^2(G_{\mathrm{res},i})}
	{\sum_i\lVert G_{\mathrm{res},i}\rVert_F^2}.
	\label{eq:gradient-coverage-metrics}
\end{equation}
where $\sigma_j(\cdot)$ denotes the $j$-th singular value. The ratio $\rho$ measures the fraction of gradient energy outside the current tangent space, while $1-\rho$ measures its coverage. The quantity $E_k$ measures the fraction of residual energy captured by retaining the top-$k$ singular components of each matrix. We compute both metrics by aggregating energies across all matrices at each probe step, then average the resulting ratios over probe steps.

\cref{fig:residual-energy-analysis} reveals two consistent phenomena across ten LoRA variants. First, the residual accounts for a substantial fraction of gradient energy: $\rho$ exceeds 85\% for every method (\cref{fig:residual-rho-10methods}), ranging from 85.4\% for PiSSA to approximately 96.9\% for FlexLoRA and LieRA, with standard LoRA at 85.7\%. Improved parameterizations thus do not consistently improve tangent-space gradient coverage, and some even leave more energy outside the tangent space. Second, the residual exhibits strong spectral concentration: $E_1$ exceeds 87\% and $E_4$ exceeds 94\% for all methods (\cref{fig:residual-energy-curves-10methods}).

The high residual energy ratio reveals substantial gradient information inaccessible to first-order updates, while its spectral concentration suggests that most of this information can be captured at a low rank. Motivated by these findings, DT-LoRA tracks the dominant residual directions during training and incorporates them through a dedicated low-rank branch.

\section{Method}
\label{sec:method}

\begin{figure*}[t]
	\centering
	\includegraphics[width=\textwidth]{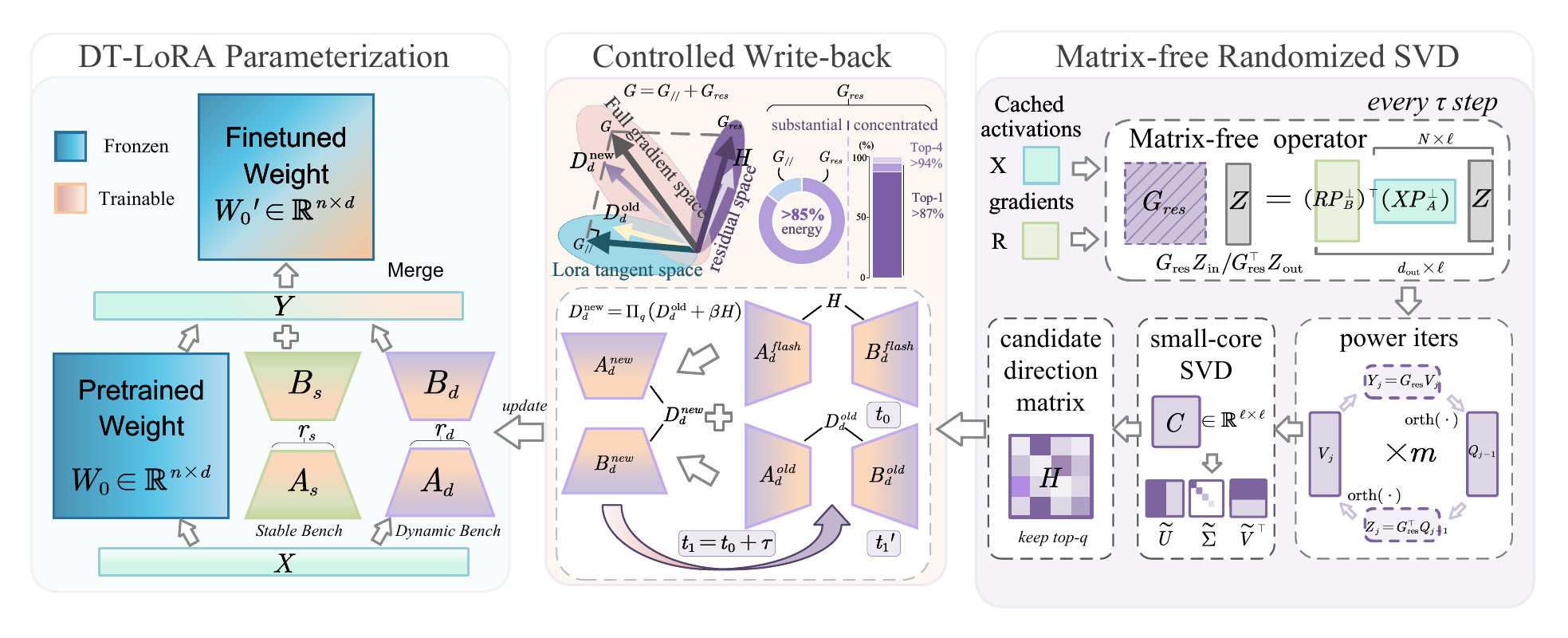}
	\caption{Overview of DT-LoRA. A fixed rank budget is split into a Stable branch and a Dynamic branch. Every $\tau$ steps, Matrix-Free Randomized SVD extracts the principal directions of the tangent-space residual gradient from the cached activations $X$ and output gradients $R$, yielding a candidate update $H$. Controlled Write-back then merges $H$ into the Dynamic branch, keeping the branch at rank $q$. After training, both branches are merged into the pre-trained weights, so inference costs the same as LoRA.
	}
	\label{fig:overview}
\end{figure*}

DT-LoRA splits a fixed rank budget into a Stable branch that accumulates task-specific directions and a Dynamic branch that periodically absorbs the principal directions of the tangent-space residual gradient (\cref{fig:overview}). 

 Every $\tau$ steps, Matrix-Free Randomized SVD extracts the top-$q$ residual directions directly from cached activations $X$ and output gradients $R$, without explicitly forming $G_{\mathrm{res}}$. Controlled Write-back then integrates these directions into the Dynamic branch. At all other steps, both branches follow standard LoRA updates.

\subsection{Dual-Branch Parameterization}
\label{sec:stable-dynamic-parameterization}

For each adapted layer, we parameterize the weight update using two branches:
\begin{equation}
	W=W_0+D_s+D_d,
	\qquad
	D_s=sB_sA_s,\quad D_d=sB_dA_d,
	\label{eq:dt-lora-param}
\end{equation}
where $A_s,B_s$ and $A_d,B_d$ are the factors of the Stable and Dynamic branches, with widths $r_s$ and $r_d=q$ and a fixed total width $r=r_s+r_d$. Both branches are trained as standard LoRA, but only the Dynamic branch is refreshed. The Stable branch retains the learned directions, while the Dynamic branch reserves a fixed capacity for new ones. Since the tangent space is determined by both branches, we compute $G_{\mathrm{res}}$ by substituting the concatenated factors
\begin{equation}
	A=\begin{bmatrix}A_s\\A_d\end{bmatrix},
	\qquad
	B=\begin{bmatrix}B_s&B_d\end{bmatrix}.
	\label{eq:joint-factors}
\end{equation}

\subsection{Closed-Form Low-Rank Residual Update}
\label{sec:closed-form-update}

The strong spectral concentration of the tangent-space residual gradient motivates a low-rank update along its principal directions. We seek this update in the orthogonal complement of the current tangent space, $\mathcal{T}^{\perp}=\{\Delta:\Delta=P_B^{\perp}\Delta P_A^{\perp}\}$. By \cref{eq:tangent-residual}, $G_{\mathrm{res}}\in\mathcal{T}^{\perp}$. We therefore formulate the following optimization problem over this space.

\begin{proposition}[Optimal low-rank update in the residual space]
	\label{prop:optimal-residual-update}
	Given a step size $\eta>0$ and a target rank $q$, consider the local quadratic model in the residual space $\mathcal{T}^{\perp}$:
	\begin{equation}
	    \min_{\substack{\operatorname{rank}(\Delta)\leq q\\
	    \Delta\in\mathcal{T}^{\perp}}}
	    \left\langle G_{\mathrm{res}},\Delta\right\rangle_F
	    +\frac{1}{2\eta}\|\Delta\|_F^2.
	    \label{eq:residual-quadratic}
	\end{equation}
	The optimal solution to this problem is
	\begin{equation}
	    \Delta^{\star}=-\eta[G_{\mathrm{res}}]_q,
	    \label{eq:ideal-direction}
	\end{equation}
	where $[G_{\mathrm{res}}]_q$ denotes the rank-$q$ truncated SVD of $G_{\mathrm{res}}$.
\end{proposition}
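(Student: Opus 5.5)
The plan is to complete the square and reduce the problem to best rank-$q$ approximation, then drop the subspace constraint because it is automatically satisfied. For any $\Delta$,
\begin{equation*}
	\left\langle G_{\mathrm{res}},\Delta\right\rangle_F+\frac{1}{2\eta}\|\Delta\|_F^2
	=\frac{1}{2\eta}\bigl\|\Delta+\eta G_{\mathrm{res}}\bigr\|_F^2-\frac{\eta}{2}\|G_{\mathrm{res}}\|_F^2 .
\end{equation*}
The last term does not depend on $\Delta$. Hence \cref{eq:residual-quadratic} is equivalent to minimizing $\|\Delta-(-\eta G_{\mathrm{res}})\|_F$ over $\Delta\in\mathcal{T}^{\perp}$ with $\operatorname{rank}(\Delta)\le q$.

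First consider the relaxed problem without the constraint $\Delta\in\mathcal{T}^{\perp}$. By the Eckart--Young--Mirsky theorem in Frobenius norm, a minimizer is the rank-$q$ truncated SVD $[-\eta G_{\mathrm{res}}]_q=-\eta[G_{\mathrm{res}}]_q$. The scaling by $-\eta$ leaves the singular vectors unchanged and scales the singular values by $\eta>0$.

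Next, check feasibility of this minimizer for the original problem. Write the SVD as $G_{\mathrm{res}}=\sum_j\sigma_j u_jv_j^{\top}$. Every left singular vector with $\sigma_j>0$ has the form $u_j=G_{\mathrm{res}}v_j/\sigma_j$, so it lies in $\operatorname{range}(G_{\mathrm{res}})$. That range is contained in $\operatorname{range}(P_B^{\perp})$ because $G_{\mathrm{res}}=P_B^{\perp}G_{\mathrm{res}}$. Therefore $P_B^{\perp}u_j=u_j$. Symmetrically, $v_j^{\top}P_A^{\perp}=v_j^{\top}$, using $G_{\mathrm{res}}=G_{\mathrm{res}}P_A^{\perp}$ and the fact that projectors are symmetric. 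It follows that $[G_{\mathrm{res}}]_q=P_B^{\perp}[G_{\mathrm{res}}]_qP_A^{\perp}\in\mathcal{T}^{\perp}$. Its rank is at most $q$ by construction.

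Since $-\eta[G_{\mathrm{res}}]_q$ minimizes over a superset of the feasible set and is itself feasible, it is optimal for \cref{eq:residual-quadratic}. This yields \cref{eq:ideal-direction}.

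The main delicate point is uniqueness. When $\sigma_q(G_{\mathrm{res}})=\sigma_{q+1}(G_{\mathrm{res}})>0$, the truncated SVD is not unique. In that case the statement holds in the sense that every choice of $[G_{\mathrm{res}}]_q$ is optimal, and every such choice is feasible by the argument above, since it is built from singular vectors with nonzero singular values. When $\sigma_q>\sigma_{q+1}$, Eckart--Young gives uniqueness, so $\Delta^{\star}$ is the unique optimum. If $\operatorname{rank}(G_{\mathrm{res}})\le q$, then $[G_{\mathrm{res}}]_q=G_{\mathrm{res}}$ and the result reduces to the unconstrained minimizer $-\eta G_{\mathrm{res}}$.
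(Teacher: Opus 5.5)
Your proof is correct and follows the paper's argument: complete the square, apply Eckart--Young to the problem without the $\mathcal{T}^{\perp}$ constraint, then note that $-\eta[G_{\mathrm{res}}]_q$ is feasible because the singular vectors of $G_{\mathrm{res}}$ lie in the ranges of $P_B^{\perp}$ and $P_A^{\perp}$. Your explicit justification of that feasibility step (via $u_j=G_{\mathrm{res}}v_j/\sigma_j$ and $G_{\mathrm{res}}=P_B^{\perp}G_{\mathrm{res}}$) and your discussion of non-uniqueness when $\sigma_q=\sigma_{q+1}>0$ fill in details the paper leaves implicit.
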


\begin{proof}
Completing the square gives
\begin{equation}
	\left\langle G_{\mathrm{res}},\Delta\right\rangle_F
	+\frac{1}{2\eta}\|\Delta\|_F^2
	=\frac{1}{2\eta}\|\Delta+\eta G_{\mathrm{res}}\|_F^2
	-\frac{\eta}{2}\|G_{\mathrm{res}}\|_F^2.
	\label{eq:complete-square}
\end{equation}
Dropping the constant term, \cref{eq:residual-quadratic} is equivalent to
\begin{equation}
	\min_{\substack{\operatorname{rank}(\Delta)\leq q\\
			\Delta\in\mathcal{T}^{\perp}}}
	\|\Delta+\eta G_{\mathrm{res}}\|_F^2.
	\label{eq:equiv-problem}
\end{equation}
By the Eckart--Young theorem~\cite{eckart1936approximation}, the optimal solution without the constraint $\Delta\in\mathcal{T}^{\perp}$ is $-\eta[G_{\mathrm{res}}]_q$. Since $G_{\mathrm{res}}\in\mathcal{T}^{\perp}$, its left and right singular vectors lie in the ranges of $P_B^{\perp}$ and $P_A^{\perp}$, respectively. 
Hence $[G_{\mathrm{res}}]_q\in\mathcal{T}^{\perp}$, making $-\eta[G_{\mathrm{res}}]_q$ feasible and therefore the optimal solution of \cref{eq:residual-quadratic}.
\end{proof}

\subsection{Matrix-Free Randomized SVD}
\label{sec:matrix-free-extraction}

\cref{eq:ideal-direction} requires the truncated SVD of $G_{\mathrm{res}}$, but explicitly forming and decomposing this matrix is costly. The key observation is that $G_{\mathrm{res}}$ never needs to be formed: by \cref{eq:tangent-residual}, its action on any narrow matrix reduces to products with the cached $X$ and $R$. We exploit this structure to derive matrix-free operators (\cref{lem:matrix-free-operators}) and use them to drive randomized SVD~\cite{halko2011finding}, which gives Matrix-Free Randomized SVD.

\begin{lemma}[Matrix-free operators]
	\label{lem:matrix-free-operators}
	Given $X\in\mathbb{R}^{N\times d_{\mathrm{in}}}$, $R\in\mathbb{R}^{N\times d_{\mathrm{out}}}$, $P_A^{\perp}$, and $P_B^{\perp}$, for any $Z_{\mathrm{in}}\in\mathbb{R}^{d_{\mathrm{in}}\times \ell}$ and $Z_{\mathrm{out}}\in\mathbb{R}^{d_{\mathrm{out}}\times \ell}$,
	\begin{equation}
	\begin{aligned}
	    G_{\mathrm{res}}Z_{\mathrm{in}}
	    &=\frac{1}{N}(RP_B^{\perp})^{\top}
	      \bigl(XP_A^{\perp}Z_{\mathrm{in}}\bigr),\\
	    G_{\mathrm{res}}^{\top}Z_{\mathrm{out}}
	    &=\frac{1}{N}(XP_A^{\perp})^{\top}
	      \bigl(RP_B^{\perp}Z_{\mathrm{out}}\bigr),
	\end{aligned}
	    \label{eq:matrix-free-operators}
	\end{equation}
	each computed in $\mathcal{O}(N\ell d)$ time with $\mathcal{O}((N+d)\ell)$ extra memory, where $d=\max(d_{\mathrm{in}},d_{\mathrm{out}})$.
\end{lemma}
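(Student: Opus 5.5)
The identities follow from the factored form of the residual in \cref{eq:tangent-residual} and associativity of matrix multiplication. Writing $G_{\mathrm{res}}=\frac{1}{N}(RP_B^{\perp})^{\top}(XP_A^{\perp})$ and right-multiplying by $Z_{\mathrm{in}}$ gives the first line of \cref{eq:matrix-free-operators}, with the parenthesization chosen so the narrow factor is absorbed first. For the second line, transpose to get $G_{\mathrm{res}}^{\top}=\frac{1}{N}(XP_A^{\perp})^{\top}(RP_B^{\perp})$, then multiply by $Z_{\mathrm{out}}$.

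The substance is the cost claim, which depends on never forming $P_A^{\perp}$, $P_B^{\perp}$, $G_{\mathrm{res}}$, or the dense products $XP_A^{\perp}$ and $RP_B^{\perp}$. I will expand the projectors via \cref{eq:orth-complements} and evaluate right to left using only thin matrices. For the first operator:
\begin{enumerate}
\item Compute $P_A^{\perp}Z_{\mathrm{in}}=Z_{\mathrm{in}}-Q_A(Q_A^{\top}Z_{\mathrm{in}})$. Since $Q_A$ has $r$ columns, this costs $\mathcal{O}(r\,d_{\mathrm{in}}\ell)$.
\item Form $U=X(P_A^{\perp}Z_{\mathrm{in}})\in\mathbb{R}^{N\times\ell}$ in $\mathcal{O}(N d_{\mathrm{in}}\ell)$.
\item Compute $(RP_B^{\perp})^{\top}U=P_B^{\perp}(R^{\top}U)$, using symmetry of $P_B^{\perp}$. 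The product $R^{\top}U$ costs $\mathcal{O}(Nd_{\mathrm{out}}\ell)$.
\item Apply $P_B^{\perp}$ as $V-Q_B(Q_B^{\top}V)$, costing $\mathcal{O}(r\,d_{\mathrm{out}}\ell)$, then scale by $1/N$.
\end{enumerate}
The second operator is handled symmetrically: project $Z_{\mathrm{out}}$ with $P_B^{\perp}$, multiply by $R$, multiply by $X^{\top}$, then project with $P_A^{\perp}$.

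Summing the steps gives $\mathcal{O}(N\ell d + r d\ell)$. The one point needing care is absorbing the projector cost $\mathcal{O}(rd\ell)$ into $\mathcal{O}(N\ell d)$. I will argue this under the natural regime $r\le N$, since the LoRA rank is far below the token count. Alternatively, the projector cost can be stated as an additive lower-order term, and with $r\ll N$ it is dominated.

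Extra memory consists of the intermediates $Q_A^{\top}Z$ and $Q_B^{\top}V$, each $r\times\ell$; the $N\times\ell$ matrix $U$; and the $d\times\ell$ vectors $P_A^{\perp}Z_{\mathrm{in}}$ and $V$. This totals $\mathcal{O}((N+d)\ell)$, again with the $r\ell$ terms absorbed. The bases $Q_A,Q_B$ and the cached $X,R$ are treated as given inputs, so they are not counted as extra memory.
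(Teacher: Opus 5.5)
Your proposal is correct and follows the same route as the paper, which simply expands the factored form of $G_{\mathrm{res}}$ in \cref{eq:tangent-residual} and evaluates the products right to left so that only $d\times\ell$ and $N\times\ell$ intermediates appear. You add two details the paper leaves implicit: applying the projectors through the bases as $Z-Q_A(Q_A^{\top}Z)$ instead of as dense matrices, and noting that the resulting $\mathcal{O}(rd\ell)$ term is absorbed into $\mathcal{O}(N\ell d)$ when $r\le N$.
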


\cref{eq:matrix-free-operators} follows from expanding \cref{eq:tangent-residual}. Evaluated from right to left, each product involves only narrow intermediates of sizes $d\times\ell$ and $N\times\ell$. Let $\ell=q+p$ with oversampling parameter $p$. We draw a Gaussian random matrix $\Omega\in\mathbb{R}^{d_{\mathrm{in}}\times\ell}$ and form $Y_0=G_{\mathrm{res}}\Omega$ to obtain a randomized sketch of the dominant left singular subspace. To improve approximation accuracy, we perform $m$ power iterations, with intermediate orthogonalization for numerical stability:
\begin{equation}
	\begin{aligned}
		Q_{j-1} &= \operatorname{orth}(Y_{j-1}),\quad
		Z_j = G_{\mathrm{res}}^{\top}Q_{j-1},\\
		V_j &= \operatorname{orth}(Z_j),\quad
		Y_j = G_{\mathrm{res}}V_j,
		\qquad j=1,\ldots,m,
	\end{aligned}
	\label{eq:randomized-power-iteration}
\end{equation}
where $\operatorname{orth}(\cdot)$ returns an orthonormal basis via QR decomposition. We then construct
$Q_U=\operatorname{orth}(Y_m)$ and
$Q_V=\operatorname{orth}(G_{\mathrm{res}}^{\top}Q_U)$,
and form the core matrix
$C=Q_U^{\top}G_{\mathrm{res}}Q_V\in\mathbb{R}^{\ell\times\ell}$. After computing its SVD, $C=\widetilde U\widetilde\Sigma\widetilde V^{\top}$, we recover a rank-$q$ candidate descent direction:
\begin{equation}
    H=-Q_U\widetilde U_{[:,1:q]}\widetilde\Sigma_{[1:q,1:q]}\widetilde V_{[:,1:q]}^{\top}Q_V^{\top}
    \approx-[G_{\mathrm{res}}]_q.
    \label{eq:candidate-direction}
\end{equation}

All products involving $G_{\mathrm{res}}$ or $G_{\mathrm{res}}^{\top}$ are evaluated by \cref{eq:matrix-free-operators}, requiring $2m+3$ operator calls in total, so each refresh costs $\mathcal{O}(Nqd)$ for small constants $m$ and $p$. This procedure has three advantages: (i)~it never stores $G_{\mathrm{res}}$, which requires $\mathcal{O}(d_{\min}d)$ memory with $d_{\min}=\min(d_{\mathrm{in}},d_{\mathrm{out}})$; (ii)~it performs SVD only on the $\ell\times\ell$ core matrix; and (iii)~it reuses the cached $X$ and $R$ without additional forward or backward passes.
\begin{table*}[t]
	\centering
	\caption{Accuracy (\%) on VTAB-1K across natural, specialized, and structured tasks. In all result tables, our method is shaded in gray and the best results are in bold.}
	\label{tab:vtab-results}
	\scriptsize
	\setlength{\tabcolsep}{1.6pt}
	\renewcommand{\arraystretch}{1.08}
	\resizebox{\textwidth}{!}{%
		\begin{tabular}{c c | c c c c c c c | c c c c | c c c c c c c c | c}
			\hline
			\multirow{11}{*}{Method} & \multirow{11}{*}{Param. (M)} & \multicolumn{7}{c|}{Natural} & \multicolumn{4}{c|}{Specialized} & \multicolumn{8}{c|}{Structured} & \multirow{11}{*}{Avg.} \\
			\cline{3-21}
			& & \rotatebox{90}{Cifar100} & \rotatebox{90}{Caltech101} & \rotatebox{90}{DTD} & \rotatebox{90}{Flower102} & \rotatebox{90}{Pets} & \rotatebox{90}{SVHN} & \rotatebox{90}{Sun397} & \rotatebox{90}{Camelyon} & \rotatebox{90}{EuroSAT} & \rotatebox{90}{Resisc45} & \rotatebox{90}{Retinopathy} & \rotatebox{90}{Clevr-Count} & \rotatebox{90}{Clevr-Dist} & \rotatebox{90}{DMLab} & \rotatebox{90}{KITTI-Dist} & \rotatebox{90}{dSpr-Loc} & \rotatebox{90}{dSpr-Ori} & \rotatebox{90}{sNORB-Azim} & \rotatebox{90}{sNORB-Ele} & \\
			\hline
			Full FT & 85.8 & 68.9 & 87.7 & 64.3 & 97.2 & 86.9 & 84.7 & 38.8 & 79.7 & 95.7 & 84.2 & 73.9 & 56.3 & 58.6 & 41.7 & 65.5 & 57.5 & 46.7 & 25.7 & 29.1 & 68.9 \\
			Linear & 0 & 64.4 & 85.0 & 63.2 & 97.0 & 86.3 & 36.6 & 51.0 & 78.5 & 87.5 & 68.5 & 74.0 & 34.3 & 30.6 & 33.2 & 55.4 & 12.5 & 20.0 & 9.6 & 19.2 & 57.6 \\
			\hline
			Adapter \cite{houlsby2019parameter} & 0.16 & 69.2 & 90.1 & 68.0 & 98.8 & 89.9 & 82.8 & 54.3 & 84.0 & 94.9 & 81.9 & 75.5 & 80.9 & 65.3 & 48.6 & 78.3 & 74.8 & 48.5 & 29.9 & 41.6 & 73.9 \\
			AdaptFormer \cite{chen2022adaptformer} & 0.16 & 70.8 & 91.2 & 70.5 & 99.1 & 90.9 & 86.6 & 54.8 & 83.0 & 95.8 & 84.4 & 76.3 & 81.9 & 64.3 & 49.3 & 80.3 & 76.3 & 45.7 & 31.7 & 41.1 & 74.7 \\
			AdaLoRA \cite{zhang2023adalora} & 0.37 & 70.5 & 90.3 & 71.9 & 99.1 & 91.4 & 87.0 & \textbf{56.0} & 83.9 & 95.3 & 85.6 & 75.2 & 81.6 & 68.1 & 51.8 & 80.2 & 75.3 & 49.1 & 28.9 & 41.1 & 75.1 \\
			NOAH \cite{zhang2025noah} & 0.36 & 69.6 & 92.7 & 70.2 & 99.1 & 90.4 & 86.1 & 53.7 & 84.4 & 95.4 & 83.9 & 75.8 & 82.8 & \textbf{68.9} & 49.9 & 81.7 & 81.8 & 48.3 & 32.8 & 44.2 & 75.5 \\
			SSF \cite{lian2022ssf} & 0.24 & 69.0 & 92.6 & \textbf{75.1} & \textbf{99.4} & \textbf{91.8} & 90.2 & 52.9 & \textbf{87.4} & 95.9 & 87.4 & 75.5 & 75.9 & 62.3 & 53.3 & 80.6 & 77.3 & \textbf{54.9} & 29.5 & 37.9 & 75.7 \\
			LoRA \cite{hu2022lora} & 0.34 & 69.6 & 92.5 & 70.5 & 99.0 & 90.1 & 89.9 & 53.7 & 85.9 & 95.8 & \textbf{87.5} & 75.8 & 82.9 & 64.5 & 53.4 & 81.3 & 82.7 & 48.1 & 31.8 & 36.8 & 75.7 \\
			PiSSA \cite{meng2024pissa} & 0.31 & 69.4 & 92.3 & 72.4 & 99.0 & 91.3 & 89.6 & 54.6 & 86.7 & 95.8 & 86.4 & 75.7 & 81.8 & 65.9 & \textbf{54.1} & 80.4 & 81.0 & 45.0 & 29.8 & 41.4 & 75.8 \\
			DoRA \cite{liu2024dora} & 0.34 & 70.0 & 91.9 & 71.3 & 99.0 & 90.0 & 88.7 & 55.5 & 85.4 & 95.5 & 86.8 & \textbf{76.5} & 82.5 & 66.5 & 53.0 & 80.3 & 82.5 & 46.8 & 30.1 & 41.9 & 75.8 \\
			FlexLoRA \cite{liu2026flexlora} & 0.34 & 71.2 & 92.5 & 72.1 & 99.1 & 90.7 & \textbf{90.6} & 54.7 & 86.4 & \textbf{96.0} & \textbf{87.5} & 76.0 & 81.9 & 65.6 & 53.9 & 80.2 & 80.6 & 45.6 & 29.9 & 41.1 & 76.0 \\
			\rowcolor{gray!12} DT-LoRA & 0.34 & \textbf{71.5} & \textbf{94.3} & 70.7 & 99.0 & 91.2 & 86.5 & 55.2 & 87.2 & 95.2 & 84.5 & 72.7 & \textbf{83.7} & 62.7 & 51.8 & \textbf{82.8} & \textbf{84.9} & 51.7 & \textbf{37.0} & \textbf{44.9} & \textbf{76.2} \\
			
			\hline
		\end{tabular}%
	}
\end{table*}
\begin{table*}[t]
	\caption{Object detection and instance segmentation results of fine-tuned ConvNeXt-V2-B on MS COCO.}
	\label{tab:coco-detection}
	\centering
	\scriptsize
	\setlength{\tabcolsep}{3.0pt}
	\renewcommand{\arraystretch}{1.08}
	\resizebox{\textwidth}{!}{%
		\begin{tabular}{l c | c c c c c c | c c c c c c | c}
			\hline
			Method & \# Params (M) & \multicolumn{6}{c|}{Detection} & \multicolumn{6}{c|}{Instance Segmentation} & All Avg. \\
			\cline{3-14}
			& & mAP & AP\(_{50}\) & AP\(_{75}\) & AP\(_{S}\) & AP\(_{M}\) & AP\(_{L}\) & mAP & AP\(_{50}\) & AP\(_{75}\) & AP\(_{S}\) & AP\(_{M}\) & AP\(_{L}\) & \\
			\hline
			Full FT & 104.97 & 49.0 & 69.1 & 54.6 & 31.7 & 53.5 & 61.7 & 43.4 & 66.2 & 47.6 & 23.4 & 47.1 & 60.9 & 50.7 \\
			\hline
			LoRA$_{r=16}$ \cite{hu2022lora} & 17.27 & 35.5 & 55.9 & 38.9 & 22.7 & 38.4 & 45.1 & 33.6 & 53.1 & 36.3 & 17.4 & 35.9 & 48.2 & 38.4 \\
			LieRA \cite{si2025liera} & 17.27 & 39.1 & 59.9 & 43.1 & 25.9 & 42.6 & 49.2 & 37.0 & 57.6 & 40.2 & 20.1 & 40.0 & 53.2 & 42.3 \\
		
			DoRA \cite{liu2024dora} & 17.27 & 39.7 & 60.6 & 43.6 & 25.3 & 43.2 & 51.4 & 37.2 & 57.9 & 40.1 & 19.4 & 40.2 & 53.9 & 42.7 \\
			\rowcolor{gray!12} DT-LoRA & 17.27 & \textbf{44.1} & \textbf{65.2} & \textbf{48.2} & \textbf{28.7} & \textbf{47.9} & \textbf{56.5} & \textbf{40.3} & \textbf{62.6} & \textbf{43.6} & \textbf{21.9} & \textbf{43.5} & \textbf{57.1} & \textbf{46.6} \\
			\hline
			LoRA$_{r=32}$ \cite{hu2022lora} & 34.54 & 35.9 & 56.4 & 39.7 & 22.9 & 39.6 & 45.7 & 34.4 & 54.2 & 37.1 & 18.3 & 36.7 & 48.9 & 39.2 \\
			DoRA \cite{liu2024dora}  & 34.54 & 40.3 & 61.1 & 44.2 & 24.1 & 44.0 & 52.2 & 37.4 & 58.4 & 40.6 & 18.5 & 40.2 & 53.6 & 42.9 \\
			LieRA \cite{si2025liera} & 34.54 & 40.5 & 61.2 & 45.1 & 25.3 & 44.1 & 50.8 & 38.2 & 58.9 & 41.8 & 19.8 & 41.2 & 53.9 & 43.4 \\
			\rowcolor{gray!12} DT-LoRA & 34.54 & \textbf{45.3} & \textbf{66.0}& \textbf{49.7} & \textbf{28.9} & \textbf{49.3} & \textbf{57.7} & \textbf{41.1} & \textbf{63.5} & \textbf{44.5} & \textbf{22.5} & \textbf{44.7} & \textbf{58.3} & \textbf{47.6} \\
			\hline
		\end{tabular}%
	}
\end{table*}

\begin{table*}[t]
	\caption{Comparison of different methods across different backbones under the Real-to-Real OVDG-SS setting.}
	\label{tab:cs7-results}
	\centering
	\setlength{\tabcolsep}{2.5pt}
		\resizebox{\textwidth}{!}{
		\begin{tabular}{@{}l c c | *{4}{c} | *{5}{c}@{}}
			\hline
			\multirow{2}{*}{Method} & \multirow{2}{*}{Backbone} & \multirow{2}{*}{Training Data} & \multicolumn{4}{c|}{Dv-19} & \multicolumn{5}{c}{Dv-58} \\
			& & & \multicolumn{1}{c}{ACDC-19} & \multicolumn{1}{c}{BDD-19} & \multicolumn{1}{c}{Mapi-19} & \multicolumn{1}{c}{Avg.} & \multicolumn{1}{c}{ACDC-41} & \multicolumn{1}{c}{BDD-41} & \multicolumn{1}{c}{Mapi-30} & \multicolumn{1}{c}{RW-10} & \multicolumn{1}{c}{Avg.} \\
			\hline
			OVSeg (CVPR'23) \cite{liang2023open} & & CS-7 & 36.6 & 40.9 & 45.8 & 41.1 & 41.2 & 48.3 & 31.7 & 32.6 & 38.5 \\
			SAN (CVPR'23) \cite{xu2023san} & & CS-7 & 37.6 & 39.4 & 44.0 & 40.3 & 44.3 & 46.8 & 35.4 & 30.7 & 39.3 \\
			CAT-Seg (CVPR'24) \cite{cho2024cat} & & CS-7 & 38.9 & 44.0 & 47.6 & 43.5 & 47.6 & 51.6 & 38.2 & 36.5 & 43.5 \\
			MAFT+ (ECCV'24) \cite{jiao2024collaborative} & & CS-7 & 40.4 & 43.5 & 50.2 & 44.7 & 46.4 & 48.8 & 38.1 & 35.4 & 42.2 \\
			ESC-Net (CVPR'25) \cite{ESCNet_2025_CVPR} & \multirow{12}{*}{\makecell{EVA02 \cite{eva02} \\ ViT-B/16}} & CS-7 & 40.4 & 43.6 & 45.9 & 43.3 & 47.3 & 49.2 & 40.1 & 34.5 & 42.7 \\
			MaskAdapter (CVPR'25) \cite{Li_2025_CVPR} & & CS-7 & 38.9 & 47.7 & 49.9 & 45.5 & 48.3 & 51.2 & 39.8 & 36.1 & 43.8 \\
			CLIPSelf (ICLR'24) \cite{wu2023clipself} & & CS-7+COCO & 43.6 & 45.6 & 47.9 & 45.7 & 49.4 & 54.3 & 39.6 & 36.5 & 45.0 \\
			RSC-CLIPSelf (ICLR'25) \cite{qiu2025refining} & & CS-7+COCO & 41.5 & 46.3 & 50.2 & 46.0 & 50.0 & 53.0 & 40.1 & 35.1 & 44.5 \\
			CAT-Seg+AdvStyle & & CS-7 & 37.2 & 43.2 & 44.6 & 41.7 & 48.0 & 47.9 & 36.0 & 36.5 & 42.1 \\
			CAT-Seg+DGInStyle & & CS-7 & 42.5 & 49.8 & 49.4 & 47.2 & 46.1 & 53.7 & 36.8 & 36.0 & 43.2 \\
			LoRA-Pro\smash{$_{\scalebox{0.7}{\text{CAT-Seg}}}$}  & & CS-7 & 40.7 & 43.7 & 45.7 & 43.3 & 48.5 & 54.9 & 36.6 & 29.0 & 42.2 \\
			DoRA \smash{$_{\scalebox{0.7}{\text{CAT-Seg}}}$}  & & CS-7 & 44.9 & 46.5 & 53.8 & 48.4 & 53.3 & 57.8 & 42.6 & 37.4 & 47.8 \\
			SoMA\smash{$_{\scalebox{0.7}{\text{CAT-Seg}}}$}  & & CS-7 & 44.5 & 46.6 & 53.8 & 48.3 & 52.3 & 57.5 & 42.0 & \textbf{39.7} & 47.9 \\
			LoRA \smash{$_{\scalebox{0.7}{\text{CAT-Seg}}}$}  & & CS-7 & 44.2 & 46.5 & 53.2 & 48.0 & 52.7 & 58.2 & 42.4 & \textbf{39.7} & 48.3 \\
			S\smash{$^{\scalebox{0.7}{2}}$}-Corr (CVPR'26) \cite{zhao2026s2corr} & & CS-7 & 44.6 & \textbf{50.1} & \textbf{56.2} & 50.3 & 52.3 & 58.6 & 42.0 & 38.6 & 47.9 \\
			\rowcolor{gray!12}  DT-LoRA\smash{$_{\scalebox{0.7}{\text{CAT-Seg}}}$} & & CS-7 & \dtGain{\textbf{48.5}}{9.6} & \dtGain{48.6}{4.6} & \dtGain{\textbf{56.2}}{8.6} & \dtGain{\textbf{51.1}}{7.6} & \dtGain{\textbf{55.3}}{7.7} & \dtGain{\textbf{59.5}}{7.9} & \dtGain{\textbf{44.1}}{5.9} & \dtGain{39.0}{2.5} & \dtGain{\textbf{49.5}}{6.0} \\
			\hline
			CAT-Seg (CVPR'24) \cite{cho2024cat} & \multirow{12}{*}{\makecell{EVA02 \\ ViT-L/14}} & CS-7 & 48.5 & 48.5 & 50.8 & 49.3 & 59.2 & 60.0 & 41.1 & 39.5 & 50.0 \\
			MaskAdapter (CVPR'25) \cite{Li_2025_CVPR} & & CS-7 & 48.1 & 49.4 & 54.6 & 50.7 & 59.7 & 59.7 & 41.8 & 36.1 & 49.3 \\
			CLIPSelf (ICLR'24) \cite{wu2023clipself} & & CS-7+COCO & 51.1 & 53.0 & 55.7 & 53.3 & 61.3 & 60.3 & 44.7 & 39.8 & 51.5 \\
			CAT-Seg+DGInStyle  & & CS-7 & 50.4 & 51.7 & 50.4 & 50.8 & 60.2 & 60.8 & 43.5 & 38.7 & 50.8 \\
			LoRA-Pro\smash{$_{\scalebox{0.7}{\text{CAT-Seg}}}$} & & CS-7 & 42.2 & 44.5 & 47.5 & 44.7 & 49.1 & 53.5 & 38.7 & 30.0 & 42.8 \\
			LoRA \smash{$_{\scalebox{0.7}{\text{CAT-Seg}}}$} & & CS-7 & 51.2 & 50.9 & 55.7 & 52.6 & 53.6 & 54.1 & 44.2 & 43.9 & 49.0 \\
			SoMA\smash{$_{\scalebox{0.7}{\text{CAT-Seg}}}$} & & CS-7 & 48.5 & 48.7 & 50.7 & 49.3 & 52.3 & 56.6 & 41.3 & 40.0 & 47.5 \\
			PCA-Seg (CVPR'26) \cite{yin2026pcaseg} & & CS-7 & 53.8 & 53.7 & 57.5 & 55.0 & 60.9 & 60.3 & 50.3 & 39.0 & 52.6 \\
			S\smash{$^{\scalebox{0.7}{2}}$}-Corr (CVPR'26) \cite{zhao2026s2corr} & & CS-7 & 54.3 & 53.1 & 60.0 & 55.8 & 62.0 & 61.7 & 47.4 & 41.9 & 53.2 \\
			DoRA \smash{$_{\scalebox{0.7}{\text{CAT-Seg}}}$} & & CS-7 & 55.7 & 55.6 & 59.9 & 57.1 & 62.3 & 61.5 & 50.3 & 44.1 & 54.6 \\
			\rowcolor{gray!12}  DT-LoRA\smash{$_{\scalebox{0.7}{\text{PCA-Seg}}}$}  & & CS-7 & \dtGain{55.8}{2.0} & \dtGain{55.8}{2.1} & \dtGain{60.5}{3.0} & \dtGain{57.4}{2.4} & \dtGain{61.0}{0.1} & \dtGain{61.1}{0.8} & \dtGain{50.6}{0.3} & \dtGain{\textbf{47.9}}{8.9} & \dtGain{55.2}{2.6} \\
			\rowcolor{gray!12}  DT-LoRA\smash{$_{\scalebox{0.7}{\text{CAT-Seg}}}$} & & CS-7 & \dtGain{\textbf{58.9}}{10.4} & \dtGain{\textbf{57.3}}{8.8} & \dtGain{\textbf{61.5}}{10.7} & \dtGain{\textbf{59.2}}{9.9} & \dtGain{\textbf{63.8}}{4.6} & \dtGain{\textbf{64.3}}{4.3} & \dtGain{\textbf{51.7}}{10.6} & \dtGain{44.5}{5.0} & \dtGain{\textbf{56.1}}{6.1} \\
			
			\hline
		\end{tabular}
	}
\end{table*}

\subsection{Controlled Write-back}
\label{sec:dynamic-refresh}
Directly replacing the Dynamic branch with candidate direction matrix $H$ would discard accumulated adaptation information and could introduce abrupt changes in the model weights. We therefore introduce Controlled Write-back, which integrates $H$ through adaptive injection scaling, merge-and-compress, and a relative-change constraint.

Let $D_d^{\mathrm{old}}$ be the Dynamic branch update immediately before a refresh, and let $D_d^{\mathrm{anchor}}$ denote its state after the last accepted refresh. We measure the accumulated change since that refresh as

\begin{equation}
	w_d =
	\left\|D_d^{\mathrm{old}}-D_d^{\mathrm{anchor}}\right\|_F
\end{equation}
measures how much the branch has changed since then, and we scale $H$ accordingly:
\begin{equation}
    \beta=\frac{\rho_{\mathrm{refresh}}w_d}{\|H\|_F+\varepsilon},
    \label{eq:refresh-scale}
\end{equation}
where $\rho_{\mathrm{refresh}}>0$ controls the refresh ratio and $\varepsilon>0$ ensures numerical stability. Since $H\approx-[G_{\mathrm{res}}]_q$, the scaled candidate $\beta H$ approximates the optimal residual-space update in \cref{eq:ideal-direction}, with $\beta$ serving as a layer-specific step size calibrated to the recent change in the Dynamic branch.

We then combine $\beta H$ with the accumulated update and compress the result to the allocated rank $q$:
\begin{equation}
    D_d^{\mathrm{new}}
    =\Pi_q\bigl(D_d^{\mathrm{old}}+\beta H\bigr),
    \label{eq:dynamic-compress}
\end{equation}
where $\Pi_q$ denotes a best rank-at-most-$q$ approximation in the Frobenius norm. Since $D_d^{\mathrm{old}}$ and $H$ both have rank at most $q$, this compression can be performed via QR decompositions of their concatenated factors followed by an SVD of a core matrix of size at most $2q\times2q$.

To control the perturbation introduced by a refresh, we impose
\begin{equation}
    E_{\mathrm{actual}}=D_d^{\mathrm{new}}-D_d^{\mathrm{old}},
    \qquad
    \frac{\|E_{\mathrm{actual}}\|_F}
    {\|D_d^{\mathrm{old}}\|_F+\varepsilon}
    \leq\rho_{\mathrm{jump}},
    \label{eq:jump-gate}
\end{equation}
where $\rho_{\mathrm{jump}}>0$ is the maximum allowed relative change. If the constraint is violated, we shrink $\beta$ and recompute \cref{eq:dynamic-compress}. If no feasible $\beta$ satisfies the constraint within the prescribed backtracking limit, we skip the refresh and retain $D_d^{\mathrm{old}}$. Otherwise, we refactorize the accepted update as $D_d^{\mathrm{new}}=sB_dA_d$ and set $D_d^{\mathrm{anchor}}=D_d^{\mathrm{new}}$. Under the smoothness assumptions stated in \Cref{app:writeback}, the constraint also bounds the changes in the loss and weight gradient induced by the refresh.

After each accepted refresh, we reset the Adam moment estimates associated with the Dynamic factors to account for their changed parameter coordinates. The Stable branch and its optimizer states remain unchanged by the refresh. \Cref{alg:dt-lora} summarizes the full procedure. After training, both branches are merged into the pre-trained weights:

\begin{equation}
W_0'=W_0+s(B_sA_s+B_dA_d),
\end{equation} 
so the additional computation is confined to training, and inference uses a standard linear layer.
\begin{table*}[t]
	\caption{Comparison of different methods across different backbones under the Synthetic-to-Real OVDG-SS setting.}
	\label{tab:gta7-results}
	\centering
	\setlength{\tabcolsep}{2.5pt}
		\resizebox{\linewidth}{!}{
		\begin{tabular}{@{}l c c | *{5}{c} | *{5}{c}@{}}
			\hline
			\multirow{2}{*}{Method} & \multirow{2}{*}{Backbone} & \multirow{2}{*}{Training Data} & \multicolumn{5}{c|}{Dv-19} & \multicolumn{5}{c}{Dv-58} \\
			& & & \multicolumn{1}{c}{CS-19} & \multicolumn{1}{c}{ACDC-19} & \multicolumn{1}{c}{BDD-19} & \multicolumn{1}{c}{Mapi-19} & \multicolumn{1}{c}{Avg.} & \multicolumn{1}{c}{ACDC-41} & \multicolumn{1}{c}{BDD-41} & \multicolumn{1}{c}{Mapi-30} & \multicolumn{1}{c}{RW-10} & \multicolumn{1}{c}{Avg.} \\
			\hline
			OVSeg (CVPR'23) \cite{liang2023open} & & GTA-7 & 41.2 & 34.9 & 42.5 & 45.1 & 40.9 & 45.1 & 52.6 & 35.5 & 32.9 & 41.5 \\
			SAN (CVPR'23) \cite{xu2023san} & & GTA-7 & 43.3 & 36.7 & 42.0 & 48.1 & 42.5 & 47.7 & 49.0 & 37.8 & 30.3 & 41.2 \\
			CAT-Seg (CVPR'24) \cite{cho2024cat} & & GTA-7 & 43.7 & 37.6 & 45.4 & 48.8 & 43.9 & 51.2 & 56.2 & 39.3 & 35.7 & 45.6 \\
			MAFT+ (ECCV'24) \cite{jiao2024collaborative} & & GTA-7 & 45.7 & 38.5 & 43.7 & 50.2 & 44.5 & 49.1 & 48.3 & 39.9 & 31.8 & 42.3 \\
			ESC-Net (CVPR'25) \cite{ESCNet_2025_CVPR} & \multirow{12}{*}{\makecell{EVA02 \\ ViT-B/16}} & GTA-7 & 42.2 & 35.9 & 44.6 & 48.1 & 42.7 & 49.4 & 52.4 & 37.9 & 33.3 & 43.3 \\
			MaskAdapter (CVPR'25) \cite{Li_2025_CVPR} & & GTA-7 & 46.0 & 36.4 & 45.7 & 50.1 & 44.6 & 50.1 & 49.2 & 38.1 & 32.7 & 42.5 \\
			CLIPSelf (ICLR'24) \cite{wu2023clipself} & & GTA-7+COCO & 48.1 & 37.6 & 47.9 & 51.1 & 46.2 & 50.6 & 52.1 & 40.4 & 34.6 & 44.4 \\
			RSC-CLIPSelf (ICLR'25) \cite{qiu2025refining} & & GTA-7+COCO & 46.2 & 37.9 & 46.0 & 51.6 & 45.4 & 51.1 & 50.7 & 40.3 & 32.0 & 43.5 \\
			CAT-Seg+AdvStyle  & & GTA-7 & 44.9 & 38.2 & 43.8 & 51.1 & 44.5 & 49.2 & 54.2 & 39.1 & 34.3 & 44.2 \\
			CAT-Seg+DGInStyle  & & GTA-7 & 44.4 & 39.1 & 48.1 & 51.4 & 45.8 & 50.7 & 55.9 & 39.6 & 34.0 & 45.1 \\
			SoMA\smash{$_{\scalebox{0.7}{\text{CAT-Seg}}}$}  & & GTA-7 & 44.7 & 39.8 & 45.6 & 52.1 & 45.6 & 50.6 & 56.2 & 41.4 & 30.5 & 44.7 \\
			LoRA \smash{$_{\scalebox{0.7}{\text{CAT-Seg}}}$} & & GTA-7 & 44.6 & 40.1 & 45.8 & 51.7 & 45.6 & 51.7 & 57.0 & 41.1 & 31.9 & 45.4 \\
			LoRA-Pro\smash{$_{\scalebox{0.7}{\text{CAT-Seg}}}$} & & GTA-7 & 39.5 & 36.9 & 42.9 & 46.6 & 41.5 & 46.9 & 54.6 & 34.1 & 26.4 & 40.5 \\
			DoRA\smash{$_{\scalebox{0.7}{\text{CAT-Seg}}}$} & & GTA-7 & 45.3 & 40.1 & 46.7 & 52.4 & 46.1 & 52.4 & 57.7 & 41.5 & 32.8 & 46.1 \\
			S\smash{$^{\scalebox{0.7}{2}}$}-Corr (CVPR'26) \cite{zhao2026s2corr} & & GTA-7 & \textbf{49.3} & 40.4 & \textbf{48.6} & \textbf{54.5} & \textbf{48.2} & 51.3 & 56.4 & \textbf{42.4} & \textbf{36.7} & \textbf{46.7} \\
			\rowcolor{gray!12}  DT-LoRA\smash{$_{\scalebox{0.7}{\text{CAT-Seg}}}$} & & GTA-7 & \dtSignedGain{47.4}{+3.7} & \dtSignedGain{\textbf{41.3}}{+3.7} & \dtSignedGain{47.8}{+2.4} & \dtSignedGain{53.8}{+5.0} & \dtSignedGain{47.6}{+3.7} & \dtSignedGain{\textbf{53.1}}{+1.9} & \dtSignedGain{\textbf{58.3}}{+2.1} & \dtSignedGain{\textbf{42.4}}{+3.1} & \dtLoss{32.9}{-2.8} & \dtSignedGain{\textbf{46.7}}{+1.1} \\
			\hline
			CAT-Seg (CVPR'24) \cite{cho2024cat} & \multirow{12}{*}{\makecell{EVA02 \\ ViT-L/14}} & GTA-7 & 48.7 & 42.4 & 47.9 & 50.8 & 47.5 & 53.6 & 59.7 & 41.1 & 38.2 & 48.2 \\
			MaskAdapter (CVPR'25) \cite{Li_2025_CVPR} & & GTA-7 & 47.9 & 40.4 & 47.2 & 50.4 & 46.5 & 52.1 & 55.9 & 37.1 & 37.1 & 45.6 \\
			CLIPSelf (ICLR'24) \cite{wu2023clipself} & & GTA-7+COCO & 47.8 & 43.1 & 50.0 & 50.1 & 47.8 & 51.2 & 59.7 & 40.6 & 39.7 & 48.0 \\
			CAT-Seg+DGInStyle & & GTA-7 & 46.0 & 43.9 & 48.9 & 51.0 & 47.5 & 53.7 & 59.0 & 37.9 & 38.0 & 47.2 \\
			SoMA\smash{$_{\scalebox{0.7}{\text{CAT-Seg}}}$} & & GTA-7 & 52.4 & 47.3 & 51.8 & 53.7 & 51.3 & 57.6 & 60.1 & 42.6 & 41.1 & 50.4 \\
			LoRA \smash{$_{\scalebox{0.7}{\text{CAT-Seg}}}$}  & & GTA-7 & 52.0 & 47.7 & 52.0 & 53.3 & 51.3 & 57.3 & 61.3 & 43.4 & 39.6  & 50.4 \\
			LoRA-Pro\smash{$_{\scalebox{0.7}{\text{CAT-Seg}}}$}  & & GTA-7 & 45.1 & 39.8 & 42.6 & 44.6 & 43.0 & 48.0 & 51.9 & 36.6 & 30.2 & 41.7 \\
			DoRA\smash{$_{\scalebox{0.7}{\text{CAT-Seg}}}$}  & & GTA-7 & 52.6  & 47.8 & 51.7 & 53.6   & 51.4  & 57.4 & 60.8 & \textbf{44.8} & 36.2  & 49.8 \\
			S\smash{$^{\scalebox{0.7}{2}}$}-Corr (CVPR'26) \cite{zhao2026s2corr} & & GTA-7 & 52.2 & 44.2 & 50.3 & 53.3 & 49.9 & 55.4 & 59.8 & 41.5 & 40.8 & 49.4 \\
			PCA-Seg (CVPR'26) \cite{yin2026pcaseg} & & GTA-7 & 50.8 & \textbf{50.7} & 50.3 & 52.8 & 51.2 & 44.4 & 44.5 & 40.8 & \textbf{43.0} & 43.2 \\
			\rowcolor{gray!12}  DT-LoRA\smash{$_{\scalebox{0.7}{\text{PCA-Seg}}}$} & & GTA-7 & \dtSignedGain{52.6}{+1.8} & \dtLoss{46.9}{-3.8} & \dtSignedGain{\textbf{52.7}}{+2.4} & \dtSignedGain{54.0}{+1.2} & \dtSignedGain{51.6}{+0.4} & \dtSignedGain{\textbf{59.4}}{+15.0} & \dtSignedGain{\textbf{62.6}}{+18.1} & \dtSignedGain{44.2}{+3.4} & \dtLoss{40.9}{-2.1} & \dtSignedGain{\textbf{51.8}}{+8.6} \\
			\rowcolor{gray!12}  DT-LoRA\smash{$_{\scalebox{0.7}{\text{CAT-Seg}}}$} & & GTA-7 & \dtGain{\textbf{52.8}}{4.1} & \dtGain{47.9}{5.5} & \dtGain{52.6}{4.7} & \dtGain{\textbf{55.4}}{4.6} & \dtGain{\textbf{52.2}}{4.7} & \dtGain{58.1}{4.5} & \dtGain{61.0}{1.3} & \dtGain{44.5}{3.4} & \dtGain{39.4}{1.2} & \dtGain{50.7}{2.5} \\
			\hline
		\end{tabular}
	}
\end{table*}

\section{Experiments}
\label{sec:experiments}

We evaluate DT-LoRA on vision tasks spanning image classification, object detection, instance segmentation, and OVDG-SS, as well as language tasks covering commonsense reasoning and natural language understanding. We compare DT-LoRA with LoRA and representative variants across these tasks, and additionally with task-specific methods on OVDG-SS.

\subsection{Experimental Setup}
\label{sec:exp-setup}

\paragraph{Datasets and Models.}
For image classification, we use VTAB-1K, which comprises 19 tasks across natural, specialized, and structured categories, each with 1,000 training examples. Following~\cite{zhai2019large,xin2024vpetl}, we adapt ViT-B/16~\cite{dosovitsky2021vit} pre-trained on ImageNet-21K. For object detection and instance segmentation, we follow LieRA~\cite{si2025liera} and fine-tune a ConvNeXt-V2-B backbone within Mask R-CNN~\cite{he2017maskrcnn} on MS COCO, using MMDetection~\cite{chen2019mmdetection}.

For OVDG-SS, we adopt the benchmark protocol of S$^{2}$-Corr~\cite{zhao2026s2corr}, training on CS-7 for real-to-real transfer or GTA-7 for synthetic-to-real transfer and reporting mIoU on two groups of unseen target domains, Dv-19 and Dv-58. We use EVA02-CLIP~\cite{eva02} encoders with ViT-B/16 and ViT-L/14 backbones within the CAT-Seg~\cite{cho2024cat} and PCA-Seg~\cite{yin2026pcaseg} frameworks.

For commonsense reasoning, we follow~\cite{hu2023llm} to fine-tune LLaMA2-7B and LLaMA3-8B on Commonsense170K and evaluate on the test sets of eight tasks, including BoolQ and PIQA. For natural language understanding, we fine-tune DeBERTaV3-base~\cite{he2023debertav3} on eight GLUE tasks~\cite{wang2019glue}. Dataset details are provided in \cref{app:datasets}.

\paragraph{Implementation Details.}
Across all tasks, DT-LoRA matches the trainable parameter budget of the LoRA baseline. For competing methods, we use the hyperparameter settings recommended in their original papers. Full task-specific configurations for DT-LoRA, including learning rates and refresh intervals, are provided in \cref{app:hyperparameters}.

\subsection{Results on Computer Vision Tasks}
\label{sec:vision-results}

\paragraph{Image Classification.}
\label{sec:vtab-results}
As shown in \cref{tab:vtab-results}, DT-LoRA achieves an average accuracy of 76.2\% with 0.34M trainable parameters, surpassing LoRA (75.7\%), DoRA (75.8\%), and FlexLoRA (76.0\%) under matched parameter budgets. The largest gains occur on structured tasks involving geometric attributes, with improvements of 8.1 and 5.2 percentage points over LoRA on sNORB-Ele and sNORB-Azim, respectively. These results suggest that incorporating residual gradient directions is particularly beneficial for tasks that differ substantially from the natural-image pre-training distribution. DT-LoRA also improves performance on natural and specialized tasks, including a 1.9-point gain on CIFAR-100.

\paragraph{Object Detection and Instance Segmentation.}
As shown in \cref{tab:coco-detection}, DT-LoRA at $r=16$ outperforms LoRA and LieRA~\cite{si2025liera} across all reported detection and instance segmentation metrics. It achieves an All Avg. score of 46.6, exceeding LoRA and LieRA by 8.2 and 4.3 points, respectively, with detection and segmentation mAP scores of 44.1 and 40.3. DT-LoRA also surpasses LoRA$_{r=32}$ and LieRA$_{r=32}$, which use twice as many trainable parameters. Doubling LoRA's rank yields only a 0.8-point improvement in All Avg., suggesting that increasing rank alone provides limited benefits in this setting. In contrast, DT-LoRA achieves substantially larger gains within a fixed rank budget by incorporating principal residual directions outside the current tangent space.

\begin{table*}[t]
	\centering
	\caption{Accuracy (\%) on eight Commonsense170K evaluation benchmarks. HS: HellaSwag; WG: WinoGrande; ARC-e: ARC-Easy; ARC-c: ARC-Challenge; OBQA: OpenBookQA.}
	\label{tab:commonsense-results}
	\resizebox{0.95\textwidth}{!}{%
		\footnotesize
		\begin{tabular}{@{}llcccccccccc@{}}
			\hline
			Model & Method & \# Params (\%) & BoolQ & PIQA & SIQA & HellaSwag & WinoGrande & ARC-e & ARC-c & OBQA & Avg. \\
			\hline
			ChatGPT & - & -- & 73.1 & 85.4 & 68.5 & 78.5 & 66.1 & 89.8 & 79.9 & 74.8 & 77.0 \\
			\hline
			\multirow{3}{*}{LLaMA2-7B}
			& LoRA$_{r=32}$ \cite{hu2022lora} & 0.83 & 69.8 & 79.9 & \textbf{79.5} & 83.6 & 79.8 & 79.3 & 64.7 & 81.0 & 77.6 \\
			& LoRA$_{r=16}$ \cite{hu2022lora} & 0.43 & \textbf{72.4} & 82.5 & 79.1 & 88.5 & 80.6 & 82.3 & 67.1 & 78.9 & 79.3 \\
			& DoRA$_{r=32}$ \cite{liu2024dora} & 0.84 & 71.8 & \textbf{83.7} & 76.0 & 89.1 & 82.6 & \textbf{83.7} & 68.2 & \textbf{82.4} & 79.7 \\
			\rowcolor{gray!12} & DT-LoRA$_{r=16}$ & 0.43 & 72.2 & 82.9 & 79.0 & \textbf{90.0} & \textbf{82.8} & 82.5 & \textbf{69.2} & 81.2 & \textbf{80.0} \\
			\hline
			\multirow{5}{*}{LLaMA3-8B}
			& LoRA$_{r=32}$ \cite{hu2022lora} & 0.70 & 70.8 & 85.2 & \textbf{79.9} & 91.7 & 84.3 & 84.2 & 71.2 & 79.0 & 80.8 \\
			& PiSSA$_{r=32}$ \cite{meng2024pissa} & 0.70 & 67.1 & 81.1 & 77.2 & 83.6 & 78.9 & 77.7 & 63.2 & 74.6 & 75.4 \\
			& MiLoRA$_{r=32}$ \cite{wang2025milora} & 0.70 & 68.8 & 86.7 & 77.2 & 92.9 & \textbf{85.6} & 86.8 & 75.5 & 81.8 & 81.9 \\
			& LoRA$_{r=16}$ \cite{hu2022lora} & 0.35 & 72.3 & 86.7 & 79.3 & 93.5 & 84.8 & 87.7 & \textbf{75.7} & 82.8 & 82.8 \\
			\rowcolor{gray!12} & DT-LoRA$_{r=16}$ & 0.35 & \textbf{72.9} & \textbf{86.9} & 79.1 & \textbf{94.3} & 84.8 & \textbf{88.8} & 75.1 & \textbf{84.8} & \textbf{83.3} \\

			\hline
		\end{tabular}%
	}
\end{table*}

\begin{table*}[t]
	\centering
	\caption{Results of fine-tuned DeBERTaV3-base on the GLUE development set.}
	\label{tab:glue-results}
	\resizebox{0.95\textwidth}{!}{%
		\footnotesize
		\begin{tabular}{@{}lc*{9}{c}@{}}
			\hline
			Method & Params & CoLA (Mcc.) & MNLI (Acc.) & MRPC (Acc.) & RTE (Acc.) & QNLI (Acc.) & SST-2 (Acc.) & STS-B (Corr.) & QQP (Acc.) & Avg. \\
			\hline
			Full FT & 184.3M & 69.2 & 89.9 & 90.2 & 83.8 & 94.0 & 95.6 & 91.6 & 92.4 & 88.3 \\
			\hline
			BitFit~\cite{benzaken2022bitfit} & 0.1M & 67.0 & 89.4 & 87.8 & 78.7 & 92.2 & 94.8 & 91.4 & 88.4 & 86.2 \\
			H-Adapter~\cite{houlsby2019parameter} & 1.2M & 62.6 & 86.5 & 89.9 & 80.4 & 92.8 & 93.7 & 90.2 & 90.8 & 85.9 \\
			P-Adapter~\cite{pfeiffer2021adapterfusion} & 1.2M & 63.9 & 86.8 & 89.5 & 80.5 & 92.6 & 93.8 & 90.7 & 90.5 & 86.0 \\
			DoRA~\cite{liu2024dora} & 1.3M & 65.4 & 87.8 & 90.1 & 81.7 & 93.0 & 91.3 & 91.3 & 91.3 & 86.5 \\
			DyLoRA$_{r=8}$~\cite{valipour2023dylora} & 0.9M & 59.5 & 86.8 & \textbf{91.4} & 77.6 & 93.0 & 94.4 & 91.1 & 89.9 & 85.5 \\
			LoRA$_{r=8}$~\cite{hu2022lora} & 1.3M & 68.5 & \textbf{89.8} & 90.7 & 84.8 & \textbf{94.1} & 94.0 & 91.2 & 87.9 & 87.6 \\
			AdaLoRA~\cite{zhang2023adalora} & 1.9M & 70.0 & 89.0 & 90.9 & \textbf{88.1} & \textbf{94.1} & 94.6 & 91.2 & 87.2 & 88.1 \\
			\rowcolor{gray!12} DT-LoRA & 1.3M & \textbf{70.6} & 89.6 & 90.4 & 85.9 & \textbf{94.1} & \textbf{95.5} & \textbf{91.8} & \textbf{91.5} & \textbf{88.7} \\
			\hline
		\end{tabular}%
	}
\end{table*}
\paragraph{Open-Vocabulary Domain Generalization in Semantic Segmentation.}
OVDG-SS requires recognizing and segmenting unseen categories in unseen target domains, jointly testing open-vocabulary recognition and cross-domain generalization. We integrate DT-LoRA into segmentation frameworks as a general-purpose fine-tuning method and compare it with LoRA, DoRA, LoRA-Pro, the domain-generalization-oriented SoMA~\cite{yun2025soma}, and S$^{2}$-Corr~\cite{zhao2026s2corr}, a method specifically designed for OVDG-SS. All LoRA-based methods use a total rank budget of $r=16$, and improvements are measured relative to the baseline of each segmentation framework.

\cref{tab:cs7-results,tab:gta7-results} report the results. With EVA02 ViT-L/14, DT-LoRA achieves the highest average mIoU in both transfer settings. In real-to-real transfer, DT-LoRA$_{\text{CAT-Seg}}$ reaches 59.2 and 56.1 mIoU on Dv-19 and Dv-58, respectively, improving over CAT-Seg by 9.9 and 6.1 points, over LoRA at the same rank by 6.6 and 7.1 points, and over S$^{2}$-Corr by 3.4 and 2.9 points. It also surpasses S$^{2}$-Corr in synthetic-to-real transfer by 2.3 and 1.3 points on the two evaluation groups. With ViT-B/16, DT-LoRA remains the best-performing method in real-to-real transfer. In synthetic-to-real transfer, it matches S$^{2}$-Corr on Dv-58 and trails it by 0.6 points on Dv-19, while outperforming all compared LoRA variants. Improvements also extend to PCA-Seg: DT-LoRA raises its mIoU by 2.4 and 2.6 points on Dv-19 and Dv-58 in real-to-real transfer, and by 8.6 points on Dv-58 in synthetic-to-real transfer, achieving the highest reported score of 51.8 on that evaluation group. These results demonstrate consistent improvements across encoder backbones, segmentation frameworks, and source datasets, without task-specific architectural modifications.

\subsection{Results on Natural Language Processing Tasks}
\label{sec:nlp-results}
\paragraph{Commonsense Reasoning.}
\label{sec:commonsense-results}
To assess whether the benefits extend beyond vision, we evaluate DT-LoRA on commonsense reasoning (\cref{tab:commonsense-results}). On LLaMA2-7B, DT-LoRA achieves an average accuracy of 80.0\% while training only 0.43\% of the model parameters, exceeding LoRA~\cite{hu2022lora}, which trains 0.83\%, by 2.4 percentage points and slightly surpassing DoRA~\cite{liu2024dora} (79.7\%). On LLaMA3-8B, DT-LoRA reaches 83.3\% accuracy with 0.35\% trainable parameters, outperforming LoRA under the same parameter budget by 0.5 points and also surpassing MiLoRA~\cite{wang2025milora}. These results demonstrate that DT-LoRA's benefits extend to language tasks without increasing the trainable parameter budget.

\paragraph{Natural Language Understanding.}
On GLUE (\cref{tab:glue-results}), DT-LoRA achieves the highest average score of 88.7, outperforming LoRA under the same parameter budget by 1.1 points and AdaLoRA, which uses more trainable parameters, by 0.6 points. It also surpasses full fine-tuning (88.3) while training less than 1\% of the model parameters. The largest improvement over LoRA is observed on QQP, with a gain of 3.6 points.

\subsection{Ablation Study}
\label{sec:ablation}

We ablate three key components of DT-LoRA on OVDG-SS (real-to-real, CAT-Seg with EVA02 ViT-L/14) and MS COCO (ConvNeXt-V2-B), using the training configurations of the main experiments. \textit{w/o Dynamic Branch} replaces the Dynamic branch with an additional rank-4 Stable branch, preserving the total rank budget of $r=16$ and the trainable parameter count. \textit{w/ Full-Gradient Refresh} extracts candidate directions from the full gradient rather than the tangent-space residual. \textit{w/o Controlled Write-back} directly replaces the Dynamic branch update with the candidate update.

As shown in \cref{tab:ablation-main}, each ablation reduces performance. Without the Dynamic branch, mIoU drops to 52.7 and 48.5 on Dv-19 and Dv-58, respectively, approaching standard LoRA performance and supporting the importance of residual-driven subspace refresh. The same trend holds on MS COCO, where the detection mAP drops from 44.1 to 36.2, close to LoRA (35.5). Full-gradient refresh also yields lower performance, suggesting that explicitly targeting directions outside the current tangent space is more effective than selecting dominant directions from the full gradient. Removing Controlled Write-back causes the largest degradation, reducing mIoU to 27.3 and 16.8 and detection mAP to 33.7, below LoRA in both cases. This result highlights the importance of integrating candidate directions with accumulated updates while controlling refresh-induced perturbations. Consistent with these findings, \cref{fig:gradient-coverage} shows increasing tangent-space gradient coverage during training, providing further evidence that DT-LoRA improves access to previously unexploited gradient directions.

\begin{table}[t]
	\centering
	\caption{Ablation study of the Dynamic branch in DT-LoRA on OVDG-SS and MS COCO. All variants use the same total rank $r=16$.}
	\label{tab:ablation-main}
	\resizebox{\linewidth}{!}{%
		\begin{tabular}{l c c c c}
			\toprule
			\multirow{2}{*}{Variant} & \multicolumn{2}{c}{OVDG-SS} & \multicolumn{2}{c}{MS COCO} \\
			\cmidrule(r){2-3} \cmidrule(l){4-5}
			& Dv-19 & Dv-58 & Det.mAP & Seg.mAP \\
			\midrule
			DT-LoRA w/o Dynamic Branch        & 52.7 & 48.5 & 36.2 & 34.5 \\
			DT-LoRA w/o Controlled Write-back & 27.3 & 16.8 & 33.7 & 32.9 \\
			DT-LoRA w/ Full-Gradient Refresh  & 54.2 & 51.9 & 43.0 & 39.3 \\
			DT-LoRA                           & 59.2 & 56.1 & 44.1 & 40.3 \\
			\bottomrule
		\end{tabular}%
	}
\end{table}
\begin{table}[t]
	\centering
	\caption{Training efficiency and extra cost of direction extraction. MF: Matrix-Free; DG: Dense gradient; RSVD/FSVD: Randomized/Full SVD. Speed is relative to DG+FSVD. Time is the steady-state per-iteration training time with $\tau=5$, measured on one A100 for OVDG-SS with CAT-Seg and EVA02 ViT-L/14 under the main experimental settings, extra memory excludes the cached $X$ and $R$, and $q\ll d_{\min}$.}
	\label{tab:dt-lora-efficiency}
	\scriptsize
	\setlength{\tabcolsep}{2pt}
	\renewcommand{\arraystretch}{1.2}
	\resizebox{\columnwidth}{!}{%
		\begin{tabular}{lccccc}
			\hline
			Method &
			\makecell{\#Params\\(M)} &
			\makecell{Extra\\time} &
			\makecell{Extra\\memory} &
			\makecell{Time\\(s/iter)} &
			\makecell{Relative\\speed} \\
			\hline
			
			LoRA \cite{hu2022lora}
			& 4.33
			& -
			& -
			& 0.82
			& $5.55\times$ \\
			
			MF+RSVD (ours)
			& 4.33
			& $\mathcal{O}(Nqd/\tau)$
			& $\mathcal{O}((N{+}d)q)$
			& 0.89
			& $5.11\times$ \\
			
			DG+RSVD
			& 4.33
			& $\mathcal{O}(Nd_{\min}d/\tau)$
			& $\mathcal{O}(d_{\min}d)$
			& 0.93
			& $4.89\times$ \\
			
			MF+FSVD
			& 4.33
			& $\mathcal{O}((N{+}d_{\min})d_{\min}d/\tau)$
			& $\mathcal{O}((N{+}d)d_{\min})$
			& 4.30
			& $1.06\times$ \\
			
			DG+FSVD
			& 4.33
			& $\mathcal{O}((N{+}d_{\min})d_{\min}d/\tau)$
			& $\mathcal{O}(d_{\min}d)$
			& 4.55
			& $1.00\times$ \\

			\hline
		\end{tabular}%
	}
\end{table}

\subsection{Efficiency and Deployment}
\label{sec:efficiency}

As shown in \cref{tab:dt-lora-efficiency}, DT-LoRA matches LoRA's trainable parameter count while introducing modest training overhead. With $\tau=5$, it requires $0.89$~s per iteration, only $70$~ms more than LoRA, and is $5.11\times$ faster than explicitly forming the residual gradient and applying full SVD ($4.55$~s per iteration). The four configurations isolate the contributions of randomized SVD and matrix-free computation. Randomized SVD restricts the decomposition to an $\ell$-dimensional sketch and accounts for most of the measured runtime reduction; the two full-SVD configurations have similar runtimes of $4.30$ and $4.55$~s per iteration. Matrix-free operators eliminate the need to materialize $G_{\mathrm{res}}$, reducing additional workspace from $\mathcal{O}(d_{\min}d)$ to $\mathcal{O}((N+d)q)$, excluding cached activations, output gradients, and projection bases. After training, both branches merge into the pre-trained weights, introducing no additional inference overhead.

\section{Conclusion}
\label{sec:conclusion}


In this work, we presented DT-LoRA, a parameter-efficient adaptation method that addresses the tangent-space constraint on factorized LoRA's first-order updates. Our analysis showed that the gradient components outside the current tangent space carry substantial energy concentrated in a few dominant directions, motivating their periodic incorporation within a fixed rank budget. DT-LoRA combines matrix-free residual direction extraction with controlled write-back to adapt the trainable subspace while limiting refresh-induced perturbations. Experiments across vision and language tasks demonstrated consistent improvements over LoRA and its variants, including new state-of-the-art results on OVDG-SS, without additional trainable parameters or inference overhead. These findings highlight the value of adapting low-rank update subspaces to previously unexploited gradient information.

{\small
\bibliographystyle{ieeenat_fullname}
\bibliography{main}
}

\clearpage
\appendix
\setlength{\textfloatsep}{8pt plus 2pt minus 2pt}
\setlength{\floatsep}{6pt plus 2pt minus 2pt}
\setlength{\intextsep}{8pt plus 2pt minus 2pt}
\setlength{\abovecaptionskip}{4pt}
\setlength{\belowcaptionskip}{2pt}
\twocolumn[{%

    \centering
    \captionof{table}{Statistics of the VTAB-1K datasets.}
    \label{tab:vtab-dataset-details}
    \scriptsize
    \setlength{\tabcolsep}{1.8pt}
    \renewcommand{\arraystretch}{1.05}
    \resizebox{\textwidth}{!}{%
    \begin{tabular}{l | c c c c c c c | c c c c | c c c c c c c c}
        \hline
        & \multicolumn{7}{c|}{Natural} & \multicolumn{4}{c|}{Specialized} & \multicolumn{8}{c}{Structured} \\
        \cline{2-20}
        Dataset
        & \rotatebox{90}{Cifar100} & \rotatebox{90}{Caltech101} & \rotatebox{90}{DTD} & \rotatebox{90}{Flower102}
        & \rotatebox{90}{Pets} & \rotatebox{90}{SVHN} & \rotatebox{90}{Sun397}
        & \rotatebox{90}{Camelyon} & \rotatebox{90}{EuroSAT} & \rotatebox{90}{Resisc45} & \rotatebox{90}{Retinopathy}
        & \rotatebox{90}{Clevr-Count} & \rotatebox{90}{Clevr-Dist} & \rotatebox{90}{DMLab} & \rotatebox{90}{KITTI-Dist}
        & \rotatebox{90}{dSpr-Loc} & \rotatebox{90}{dSpr-Ori} & \rotatebox{90}{sNORB-Azim} & \rotatebox{90}{sNORB-Ele} \\
        \hline
        \# Classes & 100 & 102 & 47 & 102 & 37 & 10 & 397 & 2 & 10 & 45 & 5 & 8 & 6 & 6 & 4 & 16 & 16 & 18 & 18 \\
        Train / Val. & \multicolumn{19}{c}{800 / 200 (1,000 for final retraining)} \\
        Test & 10000 & 6084 & 1880 & 6149 & 3669 & 26032 & 21750 & 32768 & 5400 & 6300 & 42670 & 15000 & 15000 & 22735 & 711 & 73728 & 73728 & 12150 & 12150 \\
        \hline
    \end{tabular}%
    }
    \vspace{10pt}
}]

\section{Training Procedure}
\label{app:algorithm}

\Cref{alg:dt-lora} summarizes the training procedure of DT-LoRA. Both branches are updated by the standard optimizer, and every $\tau$ steps the Dynamic branch is refreshed: the principal residual directions are extracted by Matrix-Free Randomized SVD and merged into the branch through Controlled Write-back.

\begin{algorithm}[t]
	\caption{DT-LoRA training with periodic dynamic refresh.}
	\label{alg:dt-lora}
	\small
	\begin{algorithmic}[1]
		\Require Stable factors $(A_s,B_s)$, Dynamic factors $(A_d,B_d)$, refresh interval $\tau$, Dynamic rank $q$, shrink ratio $\gamma\in(0,1)$
		\Ensure Updated Stable and Dynamic factors
		\For{each training step $t$}
		\State Compute the loss with $W=W_0+s(B_sA_s+B_dA_d)$ and back-propagate
		\State Update all factors with the ordinary optimizer
		\If{$t \bmod \tau = 0$}
		\State Obtain matched activations $X$ and output gradients $R$
		\State Extract $H\approx-[G_{\mathrm{res}}]_q$ by Matrix-Free Randomized SVD without forming $G_{\mathrm{res}}$
		\State $D_d^{\mathrm{old}}\gets sB_dA_d$, $D_d^{\mathrm{new}}\gets\Pi_q(D_d^{\mathrm{old}}+\beta H)$ with the bounded refresh scale $\beta$
		\State Shrink $\beta\gets\gamma\beta$ and recompute $D_d^{\mathrm{new}}$ until the jump constraint holds or the backtracking limit is reached
		\If{the jump constraint holds}
		\State Factorize $D_d^{\mathrm{new}}=sB_dA_d$, update $D_d^{\mathrm{anchor}}$, and reset only Dynamic optimizer states
		\EndIf
		\EndIf
		\EndFor
		\State \Return $(A_s,B_s,A_d,B_d)$
	\end{algorithmic}
\end{algorithm}
\section{Properties of Controlled Write-back}
\label{app:writeback}

This section establishes the properties of the two operations in the Controlled Write-back mechanism: merge-and-compress keeps the refresh perturbation bounded and, under a spectral-gap condition, preserves the accumulated principal subspace, while the jump constraint maintains training stability. Following the notation of \cref{sec:dynamic-refresh}, let $E=D_d^{\mathrm{new}}-D_d^{\mathrm{old}}$. Since the refresh changes only the Dynamic branch, $E$ is exactly the weight change $W^{\mathrm{new}}-W^{\mathrm{old}}$ caused by the refresh.

\begin{proposition}[Perturbation bound of merge-and-compress]
	\label{prop:bounded-refresh}
	Suppose $\operatorname{rank}(D_d^{\mathrm{old}})\leq q$, $D_d^{\mathrm{new}}$ is given by \cref{eq:dynamic-compress}, and $\beta$ is given by \cref{eq:refresh-scale}. Then
	\begin{equation}
		\|E\|_F\leq 2\beta\|H\|_F\leq 2\rho_{\mathrm{refresh}}w_d,
		\label{eq:refresh-bound}
	\end{equation}
	and $|\sigma_i(D_d^{\mathrm{new}})-\sigma_i(D_d^{\mathrm{old}})|\leq 2\beta\|H\|_F$ for any $i\leq q$.
\end{proposition}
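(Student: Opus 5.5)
The plan is a triangle-inequality argument built on the optimality of $\Pi_q$, followed by Weyl's inequality for singular values.

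Set $M=D_d^{\mathrm{old}}+\beta H$, so that $D_d^{\mathrm{new}}=\Pi_q(M)$. Then split the refresh perturbation as
\begin{equation*}
E=\bigl(\Pi_q(M)-M\bigr)+\bigl(M-D_d^{\mathrm{old}}\bigr)=\bigl(\Pi_q(M)-M\bigr)+\beta H .
\end{equation*}
The second term has norm exactly $\beta\|H\|_F$ (with $\beta\ge 0$). For the first term, use the defining property of $\Pi_q$ as a best rank-at-most-$q$ approximation in Frobenius norm (Eckart--Young). Since $\operatorname{rank}(D_d^{\mathrm{old}})\le q$ by hypothesis, $D_d^{\mathrm{old}}$ is a feasible competitor, so
\begin{equation*}
\|\Pi_q(M)-M\|_F\le\|D_d^{\mathrm{old}}-M\|_F=\beta\|H\|_F .
\end{equation*}
The triangle inequality then gives $\|E\|_F\le 2\beta\|H\|_F$. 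This competitor trick is the only nontrivial step. Note that it needs the rank hypothesis on $D_d^{\mathrm{old}}$ and nothing about $H$. The rank of $H$ only matters for computing $\Pi_q$ efficiently, not for the bound.

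For the second inequality in \cref{eq:refresh-bound}, substitute \cref{eq:refresh-scale}:
\begin{equation*}
\beta\|H\|_F=\rho_{\mathrm{refresh}}w_d\,\frac{\|H\|_F}{\|H\|_F+\varepsilon}\le\rho_{\mathrm{refresh}}w_d,
\end{equation*}
because $\varepsilon>0$ and $\rho_{\mathrm{refresh}},w_d\ge0$. Doubling gives the stated $2\rho_{\mathrm{refresh}}w_d$.

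For the singular-value statement, apply Weyl's perturbation inequality for singular values, $|\sigma_i(X+Y)-\sigma_i(X)|\le\|Y\|_2\le\|Y\|_F$. Take $X=D_d^{\mathrm{old}}$ and $Y=E$, so that $X+Y=D_d^{\mathrm{new}}$. This yields $|\sigma_i(D_d^{\mathrm{new}})-\sigma_i(D_d^{\mathrm{old}})|\le\|E\|_F\le2\beta\|H\|_F$ for every $i$, and in particular for $i\le q$.

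I do not expect a real obstacle. The only points to state carefully are that $\Pi_q$ may be any minimizer when singular values tie (the competitor argument holds for every minimizer) and that $\beta\ge0$.
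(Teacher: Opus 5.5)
Your proposal is correct and follows the paper's proof essentially step for step. You use $D_d^{\mathrm{old}}$ as a rank-$q$ competitor to get $\|M-D_d^{\mathrm{new}}\|_F\le\beta\|H\|_F$, apply the triangle inequality, substitute \cref{eq:refresh-scale} to bound $\beta\|H\|_F\le\rho_{\mathrm{refresh}}w_d$, and finish with Weyl's inequality via $\|E\|_2\le\|E\|_F$.
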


\begin{proof}
	Let $M=D_d^{\mathrm{old}}+\beta H$. Since $D_d^{\mathrm{old}}$ is an approximation of $M$ with rank at most $q$ and $\Pi_q(M)$ is the best such approximation,
	\begin{equation}
		\|M-D_d^{\mathrm{new}}\|_F\leq\|M-D_d^{\mathrm{old}}\|_F=\beta\|H\|_F.
		\label{eq:proof-bound-1}
	\end{equation}
	By the triangle inequality,
	\begin{equation}
		\|E\|_F\leq\|D_d^{\mathrm{new}}-M\|_F+\|M-D_d^{\mathrm{old}}\|_F\leq 2\beta\|H\|_F.
		\label{eq:proof-bound-2}
	\end{equation}
	By \cref{eq:refresh-scale}, $\beta\|H\|_F=\rho_{\mathrm{refresh}}w_d\|H\|_F/(\|H\|_F+\varepsilon)\leq\rho_{\mathrm{refresh}}w_d$. The singular-value bound follows from Weyl's inequality,
	\begin{equation}
		|\sigma_i(D_d^{\mathrm{old}}+E)-\sigma_i(D_d^{\mathrm{old}})|\leq\|E\|_2\leq\|E\|_F.
		\label{eq:weyl}
	\end{equation}
\end{proof}
\Cref{eq:refresh-bound} shows that the refresh perturbation depends only on the injection magnitude: it is of the same order as the cumulative update of the branch within one refresh interval and is independent of $\|D_d^{\mathrm{old}}\|_F$. When $2\beta\|H\|_F$ is smaller than the leading singular values of $D_d^{\mathrm{old}}$, these singular values undergo only a bounded relative change. Moreover, by Wedin's $\sin\Theta$ theorem, the deviation between the top-$k$ singular subspaces of $D_d^{\mathrm{old}}$ and $D_d^{\mathrm{new}}$ is $\mathcal{O}\bigl(\beta\|H\|_F/(g_k-2\beta\|H\|_F)\bigr)$ for any $k\leq q$ with spectral gap $g_k=\sigma_k(D_d^{\mathrm{old}})-\sigma_{k+1}(D_d^{\mathrm{old}})>2\beta\|H\|_F$. Hence the accumulated principal directions are preserved whenever the injection is small relative to the spectral gap. In contrast, direct replacement $D_d^{\mathrm{new}}=\beta H$ gives $\|E\|_F\geq\|D_d^{\mathrm{old}}\|_F-\beta\|H\|_F$: the perturbation grows with the accumulated state, and all existing directions are discarded.

\begin{proposition}[Jump constraint and training stability]
	\label{prop:jump-stability}
	Suppose the loss $\mathcal{L}$ is $L$-smooth with respect to $W$, \ie, $\nabla\mathcal{L}$ is $L$-Lipschitz. If the refresh satisfies \cref{eq:jump-gate}, then with $\delta=\rho_{\mathrm{jump}}(\|D_d^{\mathrm{old}}\|_F+\varepsilon)$,
	\begin{equation}
		\|\nabla\mathcal{L}(W^{\mathrm{new}})-\nabla\mathcal{L}(W^{\mathrm{old}})\|_F\leq L\delta,
		\label{eq:grad-stability}
	\end{equation}
	\begin{equation}
		\bigl|\mathcal{L}(W^{\mathrm{new}})-\mathcal{L}(W^{\mathrm{old}})\bigr|
		\leq\|\nabla\mathcal{L}(W^{\mathrm{old}})\|_F\,\delta+\frac{L}{2}\delta^{2}.
		\label{eq:loss-stability}
	\end{equation}
	Moreover, the backtracking procedure that reduces $\beta$ by a fixed ratio terminates in a finite number of steps.
\end{proposition}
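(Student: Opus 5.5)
The plan is to use the identity $W^{\mathrm{new}}-W^{\mathrm{old}}=E$, noted at the start of this section: the refresh only modifies the Dynamic branch, so the weight change is exactly $E$. An accepted refresh satisfies \cref{eq:jump-gate}, which gives directly
\begin{equation*}
\|W^{\mathrm{new}}-W^{\mathrm{old}}\|_F=\|E\|_F\leq\rho_{\mathrm{jump}}\bigl(\|D_d^{\mathrm{old}}\|_F+\varepsilon\bigr)=\delta.
\end{equation*}
If the refresh is skipped, then $W^{\mathrm{new}}=W^{\mathrm{old}}$ and both bounds hold trivially. From this point the two stability estimates are standard consequences of $L$-smoothness.

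For \cref{eq:grad-stability}, I will apply the Lipschitz property of $\nabla\mathcal{L}$ with respect to the Frobenius norm, which gives $\|\nabla\mathcal{L}(W^{\mathrm{new}})-\nabla\mathcal{L}(W^{\mathrm{old}})\|_F\leq L\|E\|_F\leq L\delta$.

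For \cref{eq:loss-stability}, I will use the descent lemma in its two-sided form,
\begin{equation*}
\bigl|\mathcal{L}(W^{\mathrm{old}}+E)-\mathcal{L}(W^{\mathrm{old}})-\langle\nabla\mathcal{L}(W^{\mathrm{old}}),E\rangle_F\bigr|\leq\tfrac{L}{2}\|E\|_F^2.
\end{equation*}
This follows by writing the difference as $\int_0^1\langle\nabla\mathcal{L}(W^{\mathrm{old}}+tE)-\nabla\mathcal{L}(W^{\mathrm{old}}),E\rangle_F\,dt$ and bounding the integrand by $Lt\|E\|_F^2$. Cauchy--Schwarz on the linear term then gives the bound $\|\nabla\mathcal{L}(W^{\mathrm{old}})\|_F\|E\|_F+\tfrac{L}{2}\|E\|_F^2$. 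Since this expression is monotone in $\|E\|_F\leq\delta$, the claimed estimate follows.

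The termination claim is where some care is needed, and it is the main obstacle. The obvious argument is that the backtracking loop has a prescribed maximum number of iterations, so it stops after at most that many shrinks, whether or not it ends in acceptance or a skip. I would state that first. I would also like to show that the constraint is eventually satisfied without relying on the cap, so the procedure does not merely time out. Write $\beta_k=\gamma^k\beta_0$. By \cref{prop:bounded-refresh}, whose proof is valid for any $\beta\geq0$,
\begin{equation*}
\|E(\beta_k)\|_F\leq 2\beta_k\|H\|_F=2\gamma^k\beta_0\|H\|_F\to0.
\end{equation*}
The threshold $\rho_{\mathrm{jump}}(\|D_d^{\mathrm{old}}\|_F+\varepsilon)$ is strictly positive because $\varepsilon>0$, and it does not depend on $\beta$. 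Hence \cref{eq:jump-gate} holds as soon as $k\geq\log\!\bigl(2\beta_0\|H\|_F/\delta\bigr)/\log(1/\gamma)$, and the loop accepts after finitely many shrinks. If $\beta_0\|H\|_F=0$, the constraint holds immediately. This gives an explicit iteration count. It also shows that a skip happens only when the prescribed limit is smaller than this count, which is consistent with how the procedure is described.
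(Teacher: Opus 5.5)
Your proposal is correct and follows the paper's proof essentially step for step. You bound $\|E\|_F\le\delta$ from the jump gate, apply Lipschitz continuity together with the two-sided quadratic bound and Cauchy--Schwarz, and get termination from $\|E\|_F\le 2\beta\|H\|_F\to 0$ against the strictly positive threshold $\delta\ge\rho_{\mathrm{jump}}\varepsilon$. Your extra remarks are sound refinements of points the paper leaves implicit: the skipped-refresh case, the fact that the first bound of \cref{prop:bounded-refresh} holds for any $\beta\ge 0$ and not only the $\beta$ of \cref{eq:refresh-scale}, and the explicit iteration count.
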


\begin{proof}
	By \cref{eq:jump-gate}, $\|W^{\mathrm{new}}-W^{\mathrm{old}}\|_F=\|E\|_F\leq\delta$. \Cref{eq:grad-stability} then follows directly from the Lipschitz continuity of $\nabla\mathcal{L}$. By the quadratic bound implied by $L$-smoothness,
	\begin{equation}
		\bigl|\mathcal{L}(W^{\mathrm{new}})-\mathcal{L}(W^{\mathrm{old}})-\langle\nabla\mathcal{L}(W^{\mathrm{old}}),E\rangle_F\bigr|\leq\frac{L}{2}\|E\|_F^{2},
	\end{equation}
	and \cref{eq:loss-stability} follows from the Cauchy--Schwarz inequality. For the backtracking procedure, \cref{prop:bounded-refresh} gives $\|E\|_F\leq 2\beta\|H\|_F\to 0$ as $\beta\to 0$, whereas the right-hand side satisfies $\delta\geq\rho_{\mathrm{jump}}\varepsilon>0$. Hence the constraint must hold after finitely many reductions of $\beta$.
\end{proof}

The smoothness assumption is only needed along the segment between $W^{\mathrm{old}}$ and $W^{\mathrm{new}}$. The two operations of Controlled Write-back act at different scales. By \cref{prop:bounded-refresh}, merge-and-compress alone bounds $\|E\|_F$ by $2\rho_{\mathrm{refresh}}w_d$, which scales with the recent cumulative update, whereas the jump constraint bounds $\|E\|_F$ relative to the current branch magnitude $\|D_d^{\mathrm{old}}\|_F$. Hence the jump constraint holds without backtracking whenever $2\rho_{\mathrm{refresh}}w_d\leq\rho_{\mathrm{jump}}(\|D_d^{\mathrm{old}}\|_F+\varepsilon)$. It becomes active only when the recent update is large relative to the accumulated branch, \eg, early in training when $D_d^{\mathrm{old}}$ is still small.

\section{Dataset Details}
\label{app:datasets}

\subsection{VTAB-1K}
\label{app:vtab-details}

VTAB-1K contains 19 classification tasks grouped into Natural, Specialized, and Structured categories. The number of classes and the size of the official test set for each task are listed in \cref{tab:vtab-dataset-details}. Following the VTAB-1K protocol, for each task we first select hyperparameters by training on 800 samples and validating on the remaining 200, then retrain on all 1,000 samples and report the top-1 accuracy on the test set.

\subsection{MS COCO}
\label{app:coco-details}

For object detection and instance segmentation, we follow the setting of LieRA~\cite{si2025liera}, training on MS COCO~\cite{lin2014coco} train2017 and evaluating on val2017 (\cref{tab:coco-dataset-details}). Standard COCO metrics are computed on bounding boxes for detection and on masks for segmentation: mAP is the average precision over IoU thresholds from 0.5 to 0.95, and we additionally report AP$_{50}$, AP$_{75}$, and AP$_{S}$, AP$_{M}$, AP$_{L}$ for small, medium, and large objects. All Avg. in the main text is the average of these 12 detection and segmentation metrics.

\begin{table}[H]
	\centering
	\caption{Statistics of the MS COCO splits.}
	\label{tab:coco-dataset-details}
	\footnotesize
	\setlength{\tabcolsep}{6pt}
	\renewcommand{\arraystretch}{1.05}
	\begin{tabular}{l c c c}
		\hline
		Split & Images & Classes & Usage \\
		\hline
		train2017 & 118,287 & 80 & Training \\
		val2017 & 5,000 & 80 & Evaluation \\
		\hline
	\end{tabular}
\end{table}

\subsection{OVDG-SS}
\label{app:ovdg-dataset-details}

OVDG-SS follows the benchmark protocol of S$^{2}$-Corr~\cite{zhao2026s2corr} (\cref{tab:ovdg-dataset-details}). CS-7 and GTA-7 define the Real-to-Real and Synthetic-to-Real training settings, respectively, and share seven source categories: road, sidewalk, building, vegetation, sky, person, and car. The target domains are divided into two groups. Dv-19 consists of ACDC-19, BDD-19, and Mapi-19, plus CS-19 under the Synthetic-to-Real setting, while Dv-58 consists of ACDC-41, BDD-41, Mapi-30, and RW-10, whose union of categories contains 58 classes. All target domains are evaluated by mIoU.

\begin{table}[H]
	\centering
	\caption{Source and target datasets for OVDG-SS.}
	\label{tab:ovdg-dataset-details}
	\footnotesize
	\setlength{\tabcolsep}{3pt}
	\renewcommand{\arraystretch}{1.05}
	\resizebox{\columnwidth}{!}{%
		\begin{tabular}{l l l r c c c}
			\hline
			Split & Dataset & Source & Images & Resolution & Classes & Group \\
			\hline
			Train & CS-7 & Cityscapes & 2,975 & $2048\times1024$ & 7 & -- \\
			Train & GTA-7 & GTA5 & 24,999 & $2048\times1024$ & 7 & -- \\
			\hline
			Test & CS-19 (GTA-7 only) & Cityscapes & 500 & $2048\times1024$ & 19 & Dv-19 \\
			Test & ACDC-19 & ACDC & 1,600 & $1920\times1080$ & 19 & Dv-19 \\
			Test & BDD-19 & BDD100K & 1,000 & $1280\times720$ & 19 & Dv-19 \\
			Test & Mapi-19 & Mapillary & 2,000 & 2K--4K & 19 & Dv-19 \\
			\hline
			Test & ACDC-41 & ACDC + SD 2.1 & 1,000 & $1920\times1080$ & 41 & Dv-58 \\
			Test & BDD-41 & BDD100K + SD 2.1 & 1,000 & $1280\times720$ & 41 & Dv-58 \\
			Test & Mapi-30 & Mapillary & 3,943 & 2K--4K & 30 & Dv-58 \\
			Test & RW-10 & ROADWork & 2,098 & $1280\times720$ & 10 & Dv-58 \\
			\hline
		\end{tabular}%
	}
\end{table}

\subsection{Commonsense Reasoning}
\label{app:commonsense-details}

Following LLM-Adapters~\cite{hu2023llm}, we fine-tune a single model on Commonsense170K, which merges the training sets of the eight tasks, and evaluate it on the test set of each task by accuracy (\cref{tab:commonsense-dataset-details}).

\begin{table}[H]
	\centering
	\caption{Statistics of the commonsense reasoning tasks.}
	\label{tab:commonsense-dataset-details}
	\footnotesize
	\setlength{\tabcolsep}{6pt}
	\renewcommand{\arraystretch}{1.05}
	\begin{tabular}{l c c c}
		\hline
		Task & Train & Test & Answer \\
		\hline
		BoolQ & 9.4K & 3,270 & Yes/No \\
		PIQA & 16.1K & 1,838 & Option \\
		SIQA & 33.4K & 1,954 & Option \\
		HellaSwag & 39.9K & 10,042 & Option \\
		WinoGrande & 63.2K & 1,267 & Option \\
		ARC-e & 1.1K & 2,376 & Option \\
		ARC-c & 2.3K & 1,172 & Option \\
		OBQA & 5.0K & 500 & Option \\
		\hline
	\end{tabular}
\end{table}

\subsection{GLUE}
\label{app:glue-details}

GLUE~\cite{wang2019glue} consists of nine natural language understanding tasks. We fine-tune DeBERTaV3-base~\cite{he2023debertav3} on eight tasks and report mean results over 5 random seeds on the development set (\cref{tab:glue-task-details}). WNLI is excluded following common practice~\cite{hu2022lora,zhang2023adalora,liu2024dora}.

\begin{table}[H]
	\centering
	\caption{GLUE tasks and statistics.}
	\label{tab:glue-task-details}
	\footnotesize
	\setlength{\tabcolsep}{8pt}
	\renewcommand{\arraystretch}{1.05}
	\resizebox{\columnwidth}{!}{%
		\begin{tabular}{l l c c l}
			\hline
			Task & Type & Train & Dev & Metric \\
			\hline
			CoLA & Acceptability & 8.5k & 1.0k & Matthews Corr. \\
			MNLI & NLI & 393k & 20k & Accuracy \\
			MRPC & Paraphrase & 3.7k & 408 & Accuracy \\
			RTE & NLI & 2.5k & 277 & Accuracy \\
			QNLI & QA/NLI & 105k & 5.4k & Accuracy \\
			SST-2 & Sentiment & 67k & 872 & Accuracy \\
			STS-B & Similarity & 7.0k & 1.5k & Pearson Corr. \\
			QQP & Paraphrase & 364k & 40k & Accuracy \\
			\hline
		\end{tabular}%
	}
\end{table}

\section{Hyperparameter Settings}
\label{app:hyperparameters}

All tables use the same notation as the main text: $r$ is the total rank, $r_s$ and $r_d$ are the ranks of the Stable and Dynamic branches, and $\alpha$ is the LoRA scaling factor. The refresh interval $\tau$ is measured in optimization steps, and $\rho_{\mathrm{refresh}}$ and $\rho_{\mathrm{jump}}$ are the refresh ratio in \cref{eq:refresh-scale} and the upper bound on the relative jump in \cref{eq:jump-gate}, respectively. If the jump constraint is violated, we reduce $\beta$ and recompress until the constraint is satisfied or the maximum number of backtracks is reached.

\subsection{Image Classification}
\label{app:vision-hyperparameters}

The hyperparameters for VTAB-1K are listed in \cref{tab:dtlora-vision-hyperparameters}. The learning rate and weight decay are selected from the given candidate sets based on validation results, and all other settings are kept the same across the 19 tasks.

\begin{table}[H]
	\centering
	\caption{Hyperparameters of DT-LoRA on VTAB-1K.}
	\label{tab:dtlora-vision-hyperparameters}
	\footnotesize
	\setlength{\tabcolsep}{6pt}
	\renewcommand{\arraystretch}{1.05}
	\begin{tabular}{l c}
		\hline
		Hyperparameter & ViT-B/16 \\
		\hline
		Rank $r$ ($r_s$ / $r_d$) & 8(6 / 2) \\
		Refresh Interval $\tau$ & 5 steps \\
		Oversampling $p$ & 2 \\
		Power Iterations $m$ & 2 \\
		$\rho_{\mathrm{refresh}}$ & 0.25 \\
		$\rho_{\mathrm{jump}}$ & 0.05 \\
		\hline
		Optimizer & AdamW \\
		Learning Rate & $\{10^{-3},5 \times 10^{-3}, 10^{-2}\}$ \\
		Weight Decay & $\{10^{-4}, 5 \times 10^{-4},10^{-3}\}$ \\
		Batch Size (train / eval) & 64 / 512 \\
		Warmup & 10 epochs, from $1\times10^{-6}$ \\
		\hline
	\end{tabular}
\end{table}

\subsection{Object Detection and Instance Segmentation}
\label{app:coco-hyperparameters}

The hyperparameters for MS COCO are listed in \cref{tab:dtlora-coco-hyperparameters}. The Dynamic branch is refreshed every $\tau$ steps starting from step 0, while the Stable branch is never refreshed. The learning rate is linearly warmed up over the first 1,000 iterations and decayed by a factor of 0.1 at epochs 8 and 11.

\begin{table}[H]
	\centering
	\caption{Hyperparameters of DT-LoRA on MS COCO.}
	\label{tab:dtlora-coco-hyperparameters}
	\footnotesize
	\setlength{\tabcolsep}{6pt}
	\renewcommand{\arraystretch}{1.05}
	\begin{tabular}{l c}
		\hline
		Hyperparameter & ConvNeXt-V2-B \\
		\hline
		Adapted Modules & All Conv2d and Linear layers \\
		Rank $r$ ($r_s$ / $r_d$) & 16 (12 / 4) / 32 (24 / 8) \\
		$\alpha$ & 16 / 32 \\
		Refresh Interval $\tau$ & 5 steps \\
		Oversampling $p$ & 4 / 8 \\
		Power Iterations $m$ & 2 \\
		$\rho_{\mathrm{refresh}}$ & 0.25 \\
		$\rho_{\mathrm{jump}}$ & 0.05 \\
		\hline
		Optimizer & AdamW \\
		Learning Rate & $1\times10^{-4}$ \\
		Weight Decay & 0.05 \\
		Batch Size & 16 \\
		LR Schedule & Step ($\times0.1$ at epochs 8, 11) \\
		Warmup & Linear, 1,000 iters \\
		Epochs & 12 \\
		Mixed Precision & AMP \\
		\hline
	\end{tabular}
\end{table}

\subsection{OVDG-SS}
\label{app:ovdg-hyperparameters}

The hyperparameters for OVDG-SS are listed in \cref{tab:dtlora-ovdg-hyperparameters}. ViT-B/16 and ViT-L/14, CAT-Seg and PCA-Seg, and CS-7 and GTA-7 all share the same configuration; only the number of adapted visual layers differs with the encoder depth. DT-LoRA is applied to the $Q$, $K$, $V$, and $O$ projections in all layers of the vision and text encoders, and the fused QKV projection in the text encoder is split into three separate adapters. Each refresh extracts $q=r_d=4$ directions and constructs the probe gradient (\cref{eq:probe-gradient}) from at most $N$ tokens. At each refresh, only the Adam moment estimates of the Dynamic branch are reset, while the optimizer states of the Stable branch are always preserved. The vision and text adapters use different learning rates.

\begin{table}[H]
	\centering
	\caption{Hyperparameters of DT-LoRA on OVDG-SS.}
	\label{tab:dtlora-ovdg-hyperparameters}
	\footnotesize
	\setlength{\tabcolsep}{6pt}
	\renewcommand{\arraystretch}{1.05}
	\begin{tabular}{l c c}
		\hline
		Hyperparameter & ViT-B/16 & ViT-L/14 \\
		\hline
		Adapted Modules & \multicolumn{2}{c}{$Q$, $K$, $V$, $O$} \\
		Adapted Layers (visual / text) & 12 / 12 & 24 / 12 \\
		Rank $r$ ($r_s$ / $r_d$) & \multicolumn{2}{c}{16 (12 / 4)} \\
		$\alpha$ & \multicolumn{2}{c}{16} \\
		\hline
		Refresh Interval $\tau$ & \multicolumn{2}{c}{5 steps} \\
		Oversampling $p$ & \multicolumn{2}{c}{4} \\
		Power Iterations $m$ & \multicolumn{2}{c}{2} \\
		$\rho_{\mathrm{refresh}}$ & \multicolumn{2}{c}{0.25} \\
		$\rho_{\mathrm{jump}}$ & \multicolumn{2}{c}{0.05} \\
		Max Backtracks & \multicolumn{2}{c}{6} \\
		\hline
		Learning Rate (visual / text) & \multicolumn{2}{c}{$3\times10^{-5}$ / $1\times10^{-5}$} \\
		LR Schedule & \multicolumn{2}{c}{Cosine} \\
		Warmup & \multicolumn{2}{c}{Linear, 2,000 iters} \\
		Iterations & \multicolumn{2}{c}{16,000} \\
		\hline
	\end{tabular}
\end{table}

\subsection{Commonsense Reasoning}
\label{app:commonsense-hyperparameters}

The hyperparameters for Commonsense170K are listed in \cref{tab:dtlora-commonsense-hyperparameters}; LLaMA2-7B and LLaMA3-8B use the same settings.

\begin{table}[H]
	\centering
	\caption{Hyperparameters of DT-LoRA on commonsense reasoning.}
	\label{tab:dtlora-commonsense-hyperparameters}
	\footnotesize
	\setlength{\tabcolsep}{6pt}
	\renewcommand{\arraystretch}{1.05}
	\begin{tabular}{l c}
		\hline
		Hyperparameter & LLaMA2-7B / LLaMA3-8B \\
		\hline
		Adapted Modules & $Q$, $K$, $V$, Up, Down \\
		Rank $r$ ($r_s$ / $r_d$) & {16 (12 / 4)} \\
		$\alpha$ & 16 \\
			Refresh Interval $\tau$ & 5 steps \\
		Oversampling $p$ & 4 / 8 \\
		Power Iterations $m$ & 2 \\
		$\rho_{\mathrm{refresh}}$ & 0.25 \\
		$\rho_{\mathrm{jump}}$ & 0.05 \\
		\hline
		Optimizer & AdamW \\
		Learning Rate & $3\times10^{-4}$ \\
		Batch Size & 16 \\
		LR Schedule & Linear \\
		Warmup & 100 steps \\
		Epochs & 3 \\
		\hline
	\end{tabular}
\end{table}

\subsection{GLUE}
\label{app:glue-hyperparameters}

The hyperparameters for GLUE are listed in \cref{tab:dtlora-glue-hyperparameters}. DT-LoRA is applied to all attention and feed-forward projections in DeBERTaV3-base. Each task is fine-tuned independently with task-specific learning rates and epochs, while all other settings are shared across tasks.

\begin{table}[H]
	\centering
	\caption{Hyperparameters of DT-LoRA on GLUE.}
	\label{tab:dtlora-glue-hyperparameters}
	\footnotesize
	\setlength{\tabcolsep}{8pt}
	\renewcommand{\arraystretch}{1.05}
	\resizebox{\columnwidth}{!}{%
		\begin{tabular}{l c c c c}
			\hline
			\multicolumn{5}{c}{\textbf{Shared Settings}} \\
			\hline
			\multicolumn{2}{l}{Adapted Modules} & \multicolumn{3}{l}{$Q$, $K$, $V$, $O$, FFN-Int, FFN-Out} \\
			\multicolumn{2}{l}{Rank $r$ ($r_s$ / $r_d$)} & \multicolumn{3}{l}{8 (6 / 2)} \\
			\multicolumn{2}{l}{$\alpha$} & \multicolumn{3}{l}{16} \\
			\multicolumn{2}{l}{LoRA Dropout} & \multicolumn{3}{l}{0.05} \\
			\multicolumn{2}{l}{Refresh Interval $\tau$} & \multicolumn{3}{l}{5 steps} \\
			\multicolumn{2}{l}{Oversampling $p$} & \multicolumn{3}{l}{2} \\
			\multicolumn{2}{l}{Power Iterations $m$} & \multicolumn{3}{l}{2} \\
			\multicolumn{2}{l}{$\rho_{\mathrm{refresh}}$} & \multicolumn{3}{l}{0.25} \\
			\multicolumn{2}{l}{$\rho_{\mathrm{jump}}$} & \multicolumn{3}{l}{0.05} \\
			\multicolumn{2}{l}{Optimizer} & \multicolumn{3}{l}{AdamW} \\
			\multicolumn{2}{l}{Weight Decay} & \multicolumn{3}{l}{0} \\
			\multicolumn{2}{l}{Batch Size} & \multicolumn{3}{l}{32} \\
			\multicolumn{2}{l}{LR Schedule} & \multicolumn{3}{l}{Linear decay with 10\% warmup} \\
			\multicolumn{2}{l}{Gradient Clipping} & \multicolumn{3}{l}{max\_grad\_norm = 1.0} \\
			\hline
			\hline
			\multicolumn{5}{c}{\textbf{Task-Specific Settings}} \\
			\hline
			Task & Learning Rate & Epochs & Max Seq. Len. & \\
			\hline
			CoLA & $8\times10^{-4}$ & 12 & 64 & \\
			MNLI & $5\times10^{-4}$ & 10 & 256 & \\
			MRPC & $1\times10^{-3}$ & 28 & 256 & \\
			RTE & $1.2\times10^{-3}$ & 5 & 512 & \\
			QNLI & $5\times10^{-4}$ & 5 & 256 & \\
			SST-2 & $8\times10^{-4}$ & 1 & 256 & \\
			STS-B & $5\times10^{-4}$ & 5 & 256 & \\
			QQP & $8\times10^{-4}$ & 5 & 320 & \\
			\hline
		\end{tabular}%
	}
\end{table}

\end{document}